\newcommand{\p}{\pi_{\text{ref}}}
\newcommand{\EOS}{\textit{\footnotesize{EOS}}\xspace}
\newcommand{\by}{\mathbf{y}}
\newcommand{\bz}{\mathbf{z}}
\newcommand{\bx}{\mathbf{x}}
\newcommand{\bw}{{\bm{\theta}}}
\newcommand{\KL}{\textit{KL\hspace{0.01in}}}
\newcommand{\mc}{\mathcal}
\theoremstyle{plain}
\newtheorem{theorem}{Theorem}[section]
\newtheorem{lemma}[theorem]{Lemma}
\theoremstyle{definition}
\theoremstyle{remark}
\icmltitlerunning{Controlled Decoding from Language Models}
\begin{document}

\twocolumn[
\icmltitle{Controlled Decoding from Language Models}

\icmlsetsymbol{equal}{*}

\begin{icmlauthorlist}
\icmlauthor{Sidharth Mudgal}{equal,gdm}
\icmlauthor{Jong Lee}{equal,gdm}
\icmlauthor{Harish Ganapathy}{gdm}
\icmlauthor{YaGuang Li}{gdm}
\icmlauthor{Tao Wang}{openai}
\icmlauthor{Yanping Huang}{gdm}
\icmlauthor{Zhifeng Chen}{gdm}
\icmlauthor{Heng-Tze Cheng}{gdm}
\icmlauthor{Michael Collins}{gdm}
\icmlauthor{Trevor Strohman}{gdm}
\icmlauthor{Jilin Chen}{gdm}
\icmlauthor{Alex Beutel}{openai}
\icmlauthor{Ahmad Beirami}{gdm}
\end{icmlauthorlist}

\icmlaffiliation{gdm}{Google DeepMind}
\icmlaffiliation{openai}{OpenAI (work done at Google)}

\icmlcorrespondingauthor{\\Sidharth Mudgal}{sidharthms@google.com}
\icmlcorrespondingauthor{Jong Lee}{leejong@google.com}
\icmlcorrespondingauthor{Ahmad Beirami}{beirami@google.com}

\icmlkeywords{Machine Learning, ICML}

\vskip 0.3in
]

\printAffiliationsAndNotice{\icmlEqualContribution} %

\begin{abstract}
KL-regularized reinforcement learning (RL) is a popular alignment framework to control the language model responses towards high reward outcomes. We pose a tokenwise RL objective and propose a modular solver for it, called {\em controlled decoding (CD)}. CD exerts control through a separate {\em prefix scorer} module, which is trained to learn a value function for the reward. The prefix scorer is used at inference time to control the generation from a frozen base model, provably sampling from a solution to the RL objective. We empirically demonstrate that CD is effective as a control mechanism on popular benchmarks. We also show that prefix scorers for multiple rewards may be combined at inference time, effectively solving a multi-objective RL problem with no additional training. We show that the benefits of applying CD transfer to an unseen base model with no further tuning as well. Finally, we show that CD can be applied in a blockwise decoding fashion at inference-time, essentially bridging the gap between the popular best-of-$K$ strategy and tokenwise control through reinforcement learning. This makes CD a promising approach for alignment of language models.
\end{abstract}

\vspace{-.2in}
\section{Introduction}
\vspace{-.06in}
Generative language models have reached a level where they can effectively  solve a variety of open-domain tasks with little task specific supervision. Hence, it is crucial to ask: {\em how can we align machine generated content to rewards when we have no control over the pre-trained representations in a generative language model?}

Controlling language model responses towards high reward outcomes is an area of active research in the literature. We divide the existing alignment methods into two  categories that differ significantly in real-world deployment: {\em generator improvement} and {\em inference-time add-on} solutions. 

Generator improvement solutions, such as KL-regularized PPO~\citep{christiano2017deep, ouyang2022training}, direct preference optimization (DPO)~\citep{rafailov2023direct}, sequence likelihood calibration (SliC)~\citep{zhao2022calibrating}, and identity preference optimization (IPO)~\cite{azar2023general} update the weights of the language model to align it with a reward model. They are efficient for inference but offer little configurability on the reward.

A simple and effective inference-time add-on solution is best-of-$K$~\citep{nakano2021webgpt,stiennon2020learning,touvron2023llama}, where $K$ i.i.d. samples are drawn from a base model,  ranked based on a reward, and the highest ranking one is selected. 
Other methods, such as  
FUDGE~\citep{yang-klein-2021-fudge} or COLD~\citep{qin2022cold}, offer a prefix scorer that is used at inference-time to control a frozen base model response towards high-reward outcomes. 
Due to their modularity of design which leaves the base model frozen, these methods offer inference-time configurability. Our goal is to propose a learning framework for such methods. 

Our contributions are summarized below. \vspace{-.1in}
\begin{itemize}[leftmargin=*, itemsep=-0.02in]
    \item We formalize a modular alignment method, {\em controlled decoding (CD)}, to solve a KL-regularized RL objective. CD learns a prefix  scorer for the reward that is used to steer the generation from a partially decoded path. 
    
    \item We show that two variants of CD, namely CD-FUDGE~\citep{yang-klein-2021-fudge} and CD-Q (ours), provably lead to sampling from a solution to the RL objecive.
    
      \item We propose {\em blockwise CD} where the prefix scorer is used to select the best-of-$K$ paths for a decoded block of $M$ tokens. This bridges the gap between the sequence-level best-of-$K$ and tokenwise RL methods.
    
    \item We empirically show that CD offers significant improvement over existing controlled generation/decoding solutions on popular benchmarks.
    
    \item We show that CD prefix scorer transfers to an unseen base model with no further training.
    
    \item We demonstrate the modularity of CD at inference-time to integrate multiple rewards into a single prefix scoring rule, and applying it to an unseen base model.
\end{itemize}

\vspace{-.2in}
\section{KL-Regularized Reinforcement Learning}
\vspace{-.05in}
Let $\bx$ be a prompt (consisting of several tokens) and let $\by = y^T := [y_1, \ldots, y_T]$ represent a response that is a concatenation of $T$ tokens. Here each token $y_t \in \mathcal{Y}$, where $\mathcal{Y}$ represents the alphabet (vocabulary).  Let $\p$ denote a pre-trained language model (LM) that is used to draw samples in an autoregressive manner. In particular, we use $\p(\cdot|[\bx, y^t])$ to denote the distribution that the LM induces on the next token on alphabet $\mathcal{Y}$ given the input that is the concatenation of the prompt $\bx$ and a partially decoded response $y^t$ of $t$ tokens.
Let $r([\bx, \by])$ be a scalar valued reward function bounded from above, e.g., the log-likelihood of a scoring function for the event that the response $\by$ in context $\bx$ is deemed safe. We define the following tokenwise reward:\vspace{-.06in}
\begin{equation*}
    R([\bx, y^{t}]):= \left\{\begin{array}{ll}
       0  & \quad \quad y_t \neq \EOS\\
  r([\bx, y^t]) & \quad \quad y_t = \EOS
        \end{array}\right., 
        \vspace{-.06in}
\end{equation*}
where $\EOS$ represents the end of sequence. Here, we only give a reward once decoding has completed and otherwise no reward is assigned to a decoding path.
We then define the {\em value function} associated with the reward as:\vspace{-.1in}
\begin{equation}
    V^\star([\bx, y^t]) := E_{z_1, z_2, \ldots \sim \p} \left\{ \sum_{\tau \geq 0}  R([\bx, y^t, z^\tau])\right\}.
\label{eq:value-definition}\vspace{-.06in}
\end{equation}
The value function captures the expected cumulative reward of a fully decoded response when decoding continues from a partially decoded sequence $y^t,$ using the base language model $\p.$

For a given $[\bx, y^t]$ such that $y_t \neq \EOS,$ we define the advantage function of a decoding policy $\pi$ as:
\begin{align*}
    A([\bx, y^{t}]; \pi) \hspace{-.02in}& \hspace{-.03in}:=\hspace{-.01in}
       E_{z \sim \pi} \left\{ V^\star([\bx, y^t, z]) - V^\star([\bx, y^{t}])\right\} \nonumber\\
       & \hspace{-.05in}= \hspace{-.03in}\sum_{z \in \mathcal{Y}} \pi(z|[\bx, y^t]) V^\star([\bx, y^t, z]) - V^\star([\bx, y^t]).
\end{align*}
Note that the advantage of the base policy is given by $A([\bx, y^{t}]; \p) = 0$ (law of total probability), and hence our goal is to choose $\pi$ to deviate from $\p$ to achieve a positive advantage over the base policy. 

Let $D([\bx, y^t];\pi)$ be the tokenwise KL divergence between a decoding policy $\pi$ and a frozen base language model $\p$ for decoding the next token after $[\bx, y^t]$ for $y_t \neq \EOS$:
\begin{align*}
   D([\bx, y^t]; \pi) &:= \KL(\pi(\cdot|[\bx,y^t] ) \| \p(\cdot|[\bx, y^t])) \nonumber\\&=  
    \sum_{z \in \mathcal{Y}} \pi(z|[\bx, y^t]) \log \left( \frac{\pi(z|[\bx, y^{t}])}{\p(z |[\bx, y^{t}])}\right),
\end{align*}
where $\KL(\cdot\|\cdot)$ denotes the KL divergence (also known as relative entropy). Recall that our goal is not to deviate too much from the base policy (measured in KL divergence) because that is expected to lead to the degeneration of the language model in other top-line performance metrics. 

To satisfy these conflicting goals, we use the KL-regularized RL objective which is defined as: %
\begin{align}
    J_\lambda([\bx, y^t]; \pi) &:= \lambda  A ([\bx, y^t]; \pi) - D([\bx, y^t]; \pi) ,
\label{eq:RL-KL}
\end{align}
where $\lambda \in \mathbb{R}^{\geq 0}$ trades off reward for drift from the base language model. Note that $J_\lambda([\bx, y^t]; \pi)$ is concave in $\pi.$ This is because $A ([\bx, y^t]; \pi)$ is linear in $\pi$ and $D([\bx, y^t]; \pi)$ is convex in $\pi.$
The first term denotes the advantage term for the reward that will be eventually obtained once the response is fully decoded. The second term is a language model (LM) negative reward signal penalizing the policy $\pi$ for drifting too far from the initial policy $\p$. 

We let $\pi_\lambda^\star(z|[\bx, y^{t}])$ denote the decoding policy function that maximizes \eqref{eq:RL-KL}.
Note that at the extreme of $\lambda=0,$ we have
$
    \pi_0^\star(z|[\bx, y^{t}]) = \p(z|[\bx, y^t])
$
which achieves $D([\bx, y^t]; \p) = 0$ and $A([\bx, y^t]; \p) = 0$. We are interested in characterizing the tradeoff curves between $A$ and $D$ achieved by $\lambda \in \mathbb{R}^{\geq 0}$ to increase $A([\bx, y^t]; \pi)$ at the cost of an increased KL penalty, $D([\bx, y^t]; \pi)$.
Our main result in this section is the following characterization of $\pi_\lambda^\star.$ 

\begin{theorem}
The optimal policy for the RL objective is unique and is given by
\begin{equation}
    \pi_\lambda^\star(z|[\bx, y^{t}]) \propto  p(z|[\bx , y^{t}]) e^{\lambda V^\star([\bx, y^t, z])}.
\end{equation}
\label{thm:RL-solution}
\vspace{-.2in}
\end{theorem}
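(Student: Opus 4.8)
The plan is to fix the context $[\bx, y^t]$ with $y_t \neq \EOS$ and treat $J_\lambda$ as a function of the single probability vector $\pi(\cdot|[\bx,y^t])$ ranging over the simplex on $\mc{Y}$. Since the term $V^\star([\bx,y^t])$ inside the advantage is independent of $\pi$, I would first peel it off as an additive constant and write, up to $\pi$-independent terms,
\[
J_\lambda([\bx, y^t]; \pi) = \lambda \sum_{z \in \mc{Y}} \pi(z|[\bx,y^t])\, V^\star([\bx, y^t, z]) - \KL(\pi(\cdot|[\bx,y^t]) \,\|\, \p(\cdot|[\bx,y^t])) + \mathrm{const}.
\]
This reduces the claim to a finite-dimensional concave maximization over the simplex.

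The key step is a \emph{completion into a single relative entropy}. I would define the candidate target $\pi_\lambda^\star(z|[\bx,y^t]) := \tfrac{1}{Z}\, \p(z|[\bx,y^t])\, e^{\lambda V^\star([\bx,y^t,z])}$ with normalizer $Z := \sum_{z} \p(z|[\bx,y^t])\, e^{\lambda V^\star([\bx,y^t,z])}$, which is finite because $r$, and hence $V^\star$, is bounded above (and $\mc{Y}$ is finite). Substituting $\log \pi_\lambda^\star = \log \p + \lambda V^\star([\bx,y^t,z]) - \log Z$ and collecting terms using $\sum_z \pi(z|[\bx,y^t]) = 1$, I would verify the algebraic identity
\[
J_\lambda([\bx, y^t]; \pi) = -\,\KL(\pi(\cdot|[\bx,y^t]) \,\|\, \pi_\lambda^\star(\cdot|[\bx,y^t])) + \log Z - \lambda V^\star([\bx, y^t]).
\]
Both $\log Z$ and $\lambda V^\star([\bx,y^t])$ are constants in $\pi$, so maximizing $J_\lambda$ is exactly equivalent to minimizing $\KL(\pi \,\|\, \pi_\lambda^\star)$.

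Optimality and uniqueness then follow in one stroke from Gibbs' inequality: relative entropy is nonnegative and vanishes if and only if its two arguments coincide. Hence $J_\lambda$ is maximized precisely at $\pi = \pi_\lambda^\star$, and the maximizer is unique, reinforcing the concavity noted after \eqref{eq:RL-KL} (here strict, since $\KL$ is strictly convex in its first argument). As a cross-check I would also record the Lagrangian route: enforcing $\sum_z \pi(z|[\bx,y^t]) = 1$ with multiplier $\mu$ and setting the gradient to zero gives $\log(\pi(z|[\bx,y^t])/\p(z|[\bx,y^t])) = \lambda V^\star([\bx,y^t,z]) - 1 - \mu$, recovering the same exponential tilting with $\mu$ fixed by normalization.

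The only real care point in an otherwise routine computation is the treatment of the simplex boundary, i.e.\ tokens $z$ with $\p(z|[\bx,y^t]) = 0$. The relative-entropy completion handles this automatically: any $\pi$ placing mass where $\p$ vanishes makes both $\KL(\pi\|\p)$ and $\KL(\pi\|\pi_\lambda^\star)$ infinite and is thus excluded, while $\pi_\lambda^\star$ is itself supported inside the support of $\p$. This is the advantage of the completion argument over the Lagrangian one, which would additionally require checking that the nonnegativity (KKT) constraints are inactive, true here because the gradient of $-\KL(\pi\|\p)$ diverges to $+\infty$ as any coordinate tends to $0^+$, forcing an interior optimum on the support of $\p$.
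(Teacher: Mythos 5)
Your proposal is correct and takes essentially the same route as the paper's proof: both complete the objective into a single relative entropy against the tilted distribution $q_\lambda(z|[\bx,y^t]) \propto \p(z|[\bx,y^t])\, e^{\lambda V^\star([\bx,y^t,z])}$ and conclude from nonnegativity and strict convexity of $\KL$ that this is the unique maximizer. Your bookkeeping is in fact slightly more careful than the paper's---you carry the additive constant $\lambda V^\star([\bx,y^t])$ explicitly and handle the simplex boundary where $\p$ vanishes, both of which the paper's proof glosses over---but the core argument is identical.
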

This result resembles that of~\cite{korbak2022rl}, with the main difference being the controller is tokenwise here.
Recall that our goal is to develop an inference-time alignment solution that keeps the language model frozen. Theorem~\ref{thm:RL-solution} gives us a way to do that by combining logits from a frozen LM and those of a value function.

{\bf Remark.} The tokenwise RL formulation here is more restrictive than the sequence-level RL, used to design RLHF and DPO.  However, we will compare with them on sequence-level {\em expected reward} vs {\em KL} tradeoffs. 

\vspace{-.07in}
\section{Controlled Decoding}
Our goal is to learn $V_\bw([\bx, y^t])$ parameterized by $\bw$ to match $V^\star([\bx, y^t])$ through the following $L_2$ objective function:\footnote{It may be possible to devise a more effective distillation objective through Fisher information shaping or other divergences.}
\begin{align*}
 &\mc{L}^\star(\bw) = E_{\bx \sim \mu} E_{\by \sim \p(\cdot | \bx)} \ell^\star(\bx, \by; \bw),\\
 \text{where}~~   &\ell^\star(\bx, \by; \bw) = \frac{1}{2} \sum_{t \in [|\by|]}(V_\bw([\bx, y^t]) - V^\star([\bx, y^t]))^2,
\end{align*}
where $\mu$ is a distribution over training prompts. Next, we present two methods to learn the prefix scorer, and two ways to use it at inference time for control.

\subsection{Training the prefix scorer}

{\bf CD-FUDGE~\cite{yang-klein-2021-fudge}.}
Given $\bx \sim \mu,$ let $\by = ([y_1, \ldots, y_T])$ be a stochastic draw from the base model $\p$.  Consider $r([\bx, \by])$  to be the stochastic reward of the fully decoded completion, $\by$. Let
\begin{align}
&\mc{L}_{F}(\bw) = E_{\bx \sim \mu} \ell_{F}(\bx, \by; \bw),\quad \text{s.t.}~~ \by \sim \p,\label{eq:fudge}\\
  \text{where}~~ &\ell_{F}(\bx, \by; \bw) = \frac{1}{2}\sum_{t \in [|\by|]} \left(V_\bw([\mathbf{x}, y^t]) - r([\bx, \by]) \right)^2.\nonumber
\end{align}

Now we state our main result on CD-FUDGE, which is formally stated and proved in Appendix~\ref{app:proof}, Theorem~\ref{thm:formal-fudge-convergence}.
\begin{theorem}[informal]
Under regularity assumptions, SGD on $\mc{L}_F$ converges to a stationary point of $\mc{L}^\star(\bw).$
\end{theorem}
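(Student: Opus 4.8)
The plan is to show that the per-sample CD-FUDGE gradient $\nabla_\bw \ell_{F}(\bx, \by; \bw)$, with $\bx \sim \mu$ and $\by \sim \p$, is an \emph{unbiased} estimator of $\nabla_\bw \mc{L}^\star(\bw)$, and then to invoke a standard stochastic-approximation guarantee to conclude convergence to a stationary point of $\mc{L}^\star$. The whole argument rests on a single identity: for any prefix $y^t$ of a sample $\by \sim \p$, the terminal reward $r([\bx, \by])$ is a conditionally unbiased surrogate for the value function, i.e. $E_{\by \sim \p}\{ r([\bx, \by]) \mid y^t \} = V^\star([\bx, y^t])$. This is immediate from \eqref{eq:value-definition}: since a reward is emitted only at \EOS, the cumulative tokenwise reward of a completion equals $r([\bx, \by])$, and continuing the decode from $y^t$ under $\p$ is exactly the sampling process that defines $V^\star$; at a terminal prefix ($y_t = \EOS$) the identity holds with no randomness since then $\by = y^t$.

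First I would differentiate to get $\nabla_\bw \ell_{F}(\bx, \by; \bw) = \sum_{t \in [|\by|]} (V_\bw([\bx, y^t]) - r([\bx, \by]))\, \nabla_\bw V_\bw([\bx, y^t])$ and decompose the target as $r = V^\star + (r - V^\star)$. Taking $E_{\by \sim \p}$ and handling the random-length sum by writing $\sum_{t \in [|\by|]} = \sum_{t \geq 1} \mathbb{1}\{|\by| \geq t\}$, I would note that both the event $\{|\by| \geq t\}$ and the factor $\nabla_\bw V_\bw([\bx, y^t])$ are measurable with respect to the prefix $y^t$ (indeed with respect to $y^{t-1}$). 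Conditioning on $y^t$ and applying the identity above, every term carrying the residual $(V^\star([\bx, y^t]) - r([\bx, \by]))$ has zero conditional expectation, so that contribution vanishes and what remains is exactly $E_{\by \sim \p}\{\sum_{t}(V_\bw([\bx, y^t]) - V^\star([\bx, y^t]))\,\nabla_\bw V_\bw([\bx, y^t])\} = \nabla_\bw \mc{L}^\star(\bw)$ after averaging over $\bx \sim \mu$. Thus $\nabla_\bw \mc{L}_{F}(\bw) = \nabla_\bw \mc{L}^\star(\bw)$ identically, so the sampled CD-FUDGE gradient is unbiased for $\nabla_\bw \mc{L}^\star$.

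With unbiasedness in hand, the final step is routine: under the stated regularity assumptions --- $L$-smoothness of $\mc{L}^\star$ (guaranteed by bounded, Lipschitz $\nabla_\bw V_\bw$), boundedness of $r$ and hence of the stochastic-gradient variance, and a Robbins--Monro step-size schedule $\sum_k \eta_k = \infty$, $\sum_k \eta_k^2 < \infty$ --- the standard SGD analysis for smooth nonconvex objectives yields $\liminf_k \|\nabla_\bw \mc{L}^\star(\bw_k)\| = 0$, i.e. convergence to a stationary point. I expect the main obstacle to lie entirely in the unbiasedness step rather than the stochastic-approximation boilerplate: one must justify the interchange of expectation with the random/infinite summation (a dominated-convergence or Fubini argument, which is precisely where boundedness of $r$ and $V_\bw$ enters) and verify the $y^t$-measurability of $\{|\by| \geq t\}$. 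It is also worth emphasizing that unbiasedness hinges on the completions being drawn \emph{on-policy} from the same $\p$ that defines $V^\star$; sampling $\by$ from any other distribution would inject a bias into the target and break the identity.
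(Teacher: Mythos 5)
Your proposal is correct and follows essentially the same route as the paper: the paper's Lemma~\ref{lem:FUDGE-unbiased} establishes exactly your key step---that $\nabla_\bw \mc{L}_{F}$ is unbiased for $\nabla_\bw \mc{L}^\star$ via the law of total expectation $E\{r([\bx,\by]) \mid y^t\} = V^\star([\bx,y^t])$, carried out there by expanding the square at the loss level and absorbing the $\bw$-independent term $C_\bx$, rather than by your residual decomposition $r = V^\star + (r - V^\star)$ at the gradient level---and then concludes by citing an off-the-shelf stochastic-gradient result. The only divergence is in that final step: the paper's formal Theorem~\ref{thm:formal-fudge-convergence} additionally assumes the PL inequality and invokes \citep[Theorem 4]{karimi2016linear} to obtain convergence to $\bw^\star$ with a rate, whereas your Robbins--Monro smooth-nonconvex argument yields only $\liminf_k \|\nabla_\bw \mc{L}^\star(\bw_k)\| = 0$, which is weaker but fully sufficient for the informal statement as given.
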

This is a remarkable result. It states that if the dataset used for training the prefix scorer in FUDGE~\citep{yang-klein-2021-fudge} is obtained by rolling out the base model, then FUDGE prefix scorer may be used to solve the RL problem in Eq.~\eqref{eq:RL-KL}. Next, we state our proposal which is an off-policy solver without the need for rolling out the base model.

{\bf CD-Q.}
Notice the following Bellman identity~\citep{sutton2018reinforcement}:
\begin{align*}
V^\star([\mathbf{x}, y^{t}]) = 
    & \left\{\begin{array}{ll} \hspace{-.05in}
 E_{z \sim \p (\cdot|[x,y^{t}])}  V^\star([\mathbf{x},y^{t}, z]), & y_t \hspace{-.01in} \neq \hspace{-.01in} \EOS\\
     r([\mathbf{x}, y^t]), & y_t \hspace{-.01in}= \hspace{-.01in}\EOS
    \end{array}\right. \hspace{-.05in}.
\end{align*}
We present a simple solution to train a prefix scorer.
Inspired by the policy evaluation updates in DQN~\citep{mnih2013playing}, we optimize the following loss function:
\begin{align}
  &\mc{L}_{Q}(\bw) = E_{\bx \sim \mu} \ell_{Q}(\bx, \by; \bw),\label{eq:Q-learning}\\
  \text{where}~ & \ell_Q(\mathbf{x}, y^t; \bw) \hspace{-.02in}=\hspace{-.02in} \frac{1}{2} \sum_{t \in [|\by|]}\left(V_\bw([\mathbf{x}, y^t]) - \dot{v}_t \right)^2\hspace{-.06in}, \nonumber\\
      & \hspace{-.25in} v_t \hspace{-.01in}=\hspace{-.01in} \left\{\hspace{-.05in}\begin{array}{ll}
 \sum_{z \in \mathcal{Y}} \p (z|[x,y^{t}]) V_\bw([\mathbf{x},y^{t}, z])  &  ~y_t \neq  \EOS\\
     r([\mathbf{x}, y^t]) &  ~ y_t = \EOS
    \end{array}\right. \hspace{-.07in}, \nonumber
\end{align}
and where $\dot{v}$ implies a stop gradient over $v$ (even though it inherently depends on $\bw$).

The abovementioned learning procedure for the prefix scorer may be performed over an {\em off-policy} dataset, scored offline using the reward for all $[\bx, \by]$~\citep{sutton2018reinforcement}. 
On the other hand, training the prefix scorer requires  (on-demand) access to the base language model $\p$ to compute the target $v_t$ in~\eqref{eq:Q-learning}. A simple modification of this procedure can be shown to be provably convergent~\citep{wang2021convergent}.\footnote{Note that one may improve on the proposed solver (cf.~\citep{hessel2018rainbow}), but we present the simplest form for the sake of clarity, which already gives good empirical performance. } We also remark that many other improvements over DQN have been proposed over the years, many of which amount to Rainbow~\citep{hessel2018rainbow}. Exploring how to improve CD-Q using these techniques is an interesting are for future work. 

\subsection{Inference-time sampling strategies}
\label{sec:inference-time-sampling}
Equipped with the prefix scorer, we use it in two different ways at inference time to align the base model.

\vspace{-.05in}
\paragraph{Tokenwise sampling.}
We use the prefix scorer for tokenwise sampling per Theorem~\ref{thm:RL-solution}. In this case, given context $\bx$ and a partially decoded sequence $y^t,$ we obtain the logits of $\p([\bx, y^t,z])$ and $V_\bw([\bx, y^t, z])$ for all $z$ from the base policy and the prefix scorer. Then, we linearly combine the logits to sample from the following distribution:
\begin{align}
    & z \sim \pi_\bw(\cdot|[\bx, y^t]) \\
    \text{where}~~~ &\pi_\bw(z|[\bx, y^t])  \propto \p(z|[\bx, y^t]) e^{\lambda V_\bw([\bx, y^t, z])}. \nonumber
\end{align}
An illustration of tokenwise sampling using CD prefix scorer is presented in Figure~\ref{fig:tokenwise-CD}, where the prefix scorer is used to downweight decoding of tokens that may lead to undesirable outcomes. Note that tokenwise sampling is the most straight-forward way to use the prefix scorer, which requires one call to the prefix scorer per decoding of each token, and was also used by~\citet{yang-klein-2021-fudge}.%

\begin{figure}[h]
    \centering
    \includegraphics[width=\linewidth]{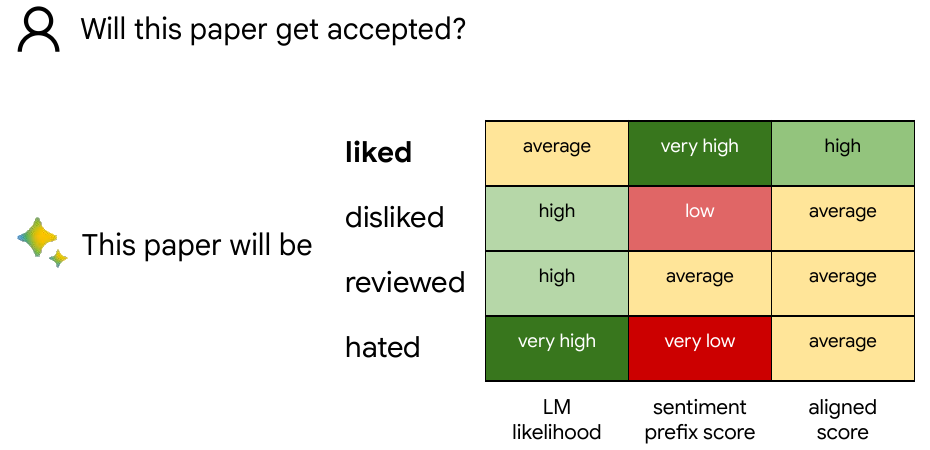}
    \vspace{-.2in}
    \caption{An illustration of {\bf tokenwise sampling} using CD prefix scorer where the alignment goal is to decode sequences with positive sentiment. The sentiment score is used to shape the overall {\em aligned score} for sampling, which results in downweighting of the high likelihood tokens that might result in negative sentiment and upweighting of tokens that lead to positive sentiment. }
    \label{fig:tokenwise-CD}
    \vspace{-.1in}
\end{figure}

\vspace{-.05in}
\paragraph{Blockwise best-of-$K$.}
Next, we present a sampling strategy that combines RL with best-of-$K$.
We sample $K$ i.i.d. continuation blocks of length $M$ from the base policy, and accept the continuation with the highest prefix score and reject the rest:
\begin{align}
\label{eq:blockwise-bok}
    &z^M := \arg \max_{ \left\{z_{(k)}^M\right\}_{k \in [K]}} V_\bw([\bx, y^t, z^M_{(k)}])\\
    \text{where}~~~ &\left\{z_{(k)}^M\right\}_{k \in [K]} \overset{\text{i.i.d.}}{\sim} \p(z^M|[\bx, y^t]), \nonumber
\end{align}
and continue until a candidate with \EOS has been accepted. 

An illustration of the blockwise sample and rerank is presented in Figure~\ref{fig:blockwise-CD}, where the prefix scorer is used to rerank $M$(=4) decoding paths and choose the candidate with the most positive sentiment.

\begin{figure}[t]
    \centering
    \includegraphics[width=\linewidth]{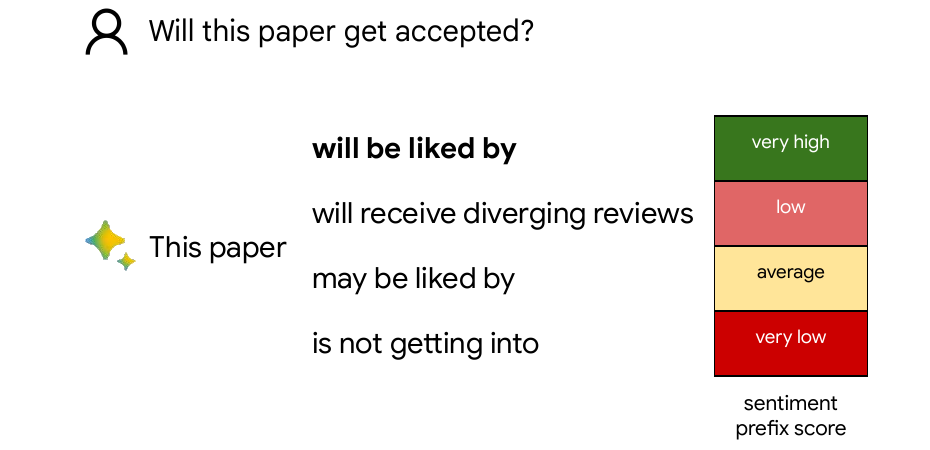}
    \vspace{-.2in}
    \caption{An illustration of {\bf blockwise best-of-$K$} using CD prefix scorer where the alignment goal is to decode sequences with positive sentiment. First, $K$(=4) continuations of length $M$(=4) tokens are sampled from the base LM, and scored using the prefix scorer. The block of tokens with the highest prefix score is selected as the continuation, and the process is continued. }
    \label{fig:blockwise-CD}
    \vspace{-.15in}
\end{figure}

{\bf Blockwise vs tokenwise control.} Note that similar to best-of-$K,$ blockwise CD is not designed to optimally solve the sequence level KL-regularized objective that is the objective of RLHF methods, such as PPO and DPO. However, empirically we observe that best-of-$K$ often results in better reward-KL tradeoffs, e.g.,~\citep[Figure 1]{gao2023scaling} and~\citep[Figure 3]{rafailov2023direct}. In fact, best-of-$K$ is shown to be almost sampling from the optimally aligned distribution through KL-regularized RL~\citep{yang2024asymptotics}. This motivates the exploration of blockwise control techniques that rely on the strength of best-of-$K$.

{\bf Blockwise control vs Best-of-$K$.}
In terms of inference throughput, blockwise CD is similar to the best-of-$K$ for the same value of $K$. However, it offers two major advantages: \vspace{-.05in}
\begin{enumerate}[leftmargin=*, itemsep=0in]
    \item The decoding latency here is only $M$ tokens, whereas the best-of-$K$ method needs to fully decoded all $K$ sequences before it can select one to be served. If the sequence length is large, e.g., when the prompt is to {\em write an essay}, this would not be tolerated. This can open up new applications such as streaming.
    \item To achieve high rewards, best-of-$K$ might require unreasonably high values of $K$. Blockwise CD enables similar reward values with significantly smaller $K.$ We experimentally show the same reward level as best-of-$K$ with up to 10x smaller $K.$
    \vspace{-.05in}
\end{enumerate}

\vspace{-.1in}
\section{Experimental Setup}
\vspace{-.05in}

We examine performance of the controlled decoding models with our proposed inference-time sampling strategies across two tasks. For all experiments, unless otherwise specified the base generative model we use is PaLM 2-XXS (Gecko), and the prefix scorer is also finetuned from PaLM 2-XXS. %

\vspace{-.05in}
\subsection{Datasets}

{\bf DSTC8 Reddit conversations corpus~\citep{Reddit}} is a dataset containing millions of multi-turn conversations from Reddit threads. We use this dataset to optimize response length.

{\bf Anthropic HH~\citep{bai2022training}} is a helpfulness and harmlessness benchmark where the assistant tries to complete next turn in a conversation with a human. We use this to train a reward model that learns human preferences on the helpfulness and harmlessness of the generation.

{\bf TL;DR~\citep{stiennon2020learning}} is a dataset of Reddit posts where each example has information about the post, two summarization candidates, and a preference from a human annotator. We use this to train a reward model that learns summarization preference.

\vspace{-.05in}
\subsection{Reward Models}
\vspace{-.05in}
{\bf Response length.} We used the length of the response as a reward. In this case, we used
 $r_{\text{length}}([\bx, y^T]) = \log(T/ T_{\max})$, where $T_{\max}=1024$. 
 
{\bf Helpfulness and harmlessness.}  
We trained a reward model (Reward-XXS) by finetuning PaLM 2-XXS using pairwise preference data of Anthropic HH~\citep{bai2022training} via the Bradley-Terry (BT) model and selected the checkpoint with the highest eval accuracy. 
Here, $r_{\text{HH}}([\bx, y^T])$ is the log-probability of the resulting pointwise HH classifier.

{\bf Summary quality.} 
Similarly, we trained a PaLM 2-XXS reward model using the pairwise preferences on summary quality~\citep{stiennon2020learning} using the BT model, and picked the checkpoint with the highest eval accuracy. %

\vspace{-.05in}
\subsection{Baselines}
\vspace{-.05in}

In addition to CD-Q and blockwise CD-Q, we consider the following baselines.

{\bf CD-FUDGE~\citep{yang-klein-2021-fudge}}  is trained in the same way as CD-Q with the difference being the target in~\eqref{eq:Q-learning} replaced by the explicit reward received in a given decoding path from the dataset. For best performance, CD-FUDGE is trained on a dataset where the responses are obtained by rolling out the base model. Additionally, we also consider the blockwise best-of-$K$ variant of FUDGE~\cite{yang-klein-2021-fudge}, named {\em blockwise CD-FUDGE}, which is inspired by the proposed blockwise CD-Q method in this paper.

{\bf KL-regularized PPO~\citep{ouyang2022training}} solves a KL-regularized RL problem using PPO~\citep{schulman2017proximal}.

{\bf DPO~\citep{rafailov2023direct}} is trained on a pairwise preference dataset. For a more fair comparison, we used {\em online DPO} by rolling out the policy and sampling two generations and optimizing the DPO objective on their explicit rewards.

{\bf IPO~\citep{azar2023general}} is trained in a similar way to DPO except that the objective bakes in new regularization to avoid some of the degeneration issues of DPO. Similarly to DPO, we use {\em online IPO} in this paper.

{\bf Best-of-$K$} is an inference-time alignment solution where $K$ responses are drawn from the base model, ranked using the reward, and the best one is selected.

\vspace{-.05in}
\subsection{Evaluation Metrics}
{\bf KL divergence.} We measure the KL divergence between the aligned policy and the base policy, $E_{\bx \sim \mu} E_{\by \sim  \pi(\cdot | x)} \{ \log \pi(\by|\bx) - \log \p(\by|\bx) \}$, as a proxy for deterioration of model capabilities and reward overoptimization.
For CD-Q and CD-FUDGE, we sweep the strength of the prefix scorer to control $\KL(\pi \| \p).$ For PPO, DPO and IPO, we sweep the strength of the (implicit) KL-regularizer to achieve the same goal. Finally, for best-of-$K,$ blockwise CD-Q, and blockwise CD-FUDGE, we do this by sweeping $K.$ For best-of-$K$, we use the upper bound formula on KL divergence $\KL(\pi \| \p) 
\leq \log(K) - (K-1)/K$~\cite{stiennon2020learning, beirami2024theoretical}. For blockwise sampling strategies, we use an upper bound on the KL divergence given by 
$
    \KL(\pi \| \p) \leq E_{\bx \sim \mu }\left(\log(K) - (K-1)/K \right) \left\lceil \frac{L_{\bx}}{M}\right\rceil,
$
 where $L_\bx$ is the number of decoded tokens in the full response given prompt $\bx$, which is an extension of~\citep[Theorem 1]{beirami2024theoretical}. To this end, we focus on KL values smaller than $10$, beyond which the policy shows significant signs of overfitting~\citep{eisenstein2023helping}. We also remark that the sequence-level KL divergence used here for evaluation is different from our token-level design, which makes the evaluation more favorable to PPO, DPO, and IPO that directly optimize the tradeoff between expected reward and sequence-level KL divergence.

{\bf Normalized expected reward.} We report the expected reward of the aligned policy, $E_{\bx \sim \mu}E_{\by \sim \pi_\bw(\cdot | x)} r(\bx,\by)$, normalized to that of the reference policy.

{\bf Win-rate against base policy.} We report the win-rate of the aligned policy against the base policy, $E_{\bx \sim \mu}E_{\by \sim  \pi_\bw(\cdot | x)} E_{\bz \sim \p(\cdot | x)} \mathbf{1}[r(\bx,\by)> r(\bx,\bz)]$.

{\bf Reward vs KL tradeoffs.} Following~\cite{gao2023scaling}, we report tradeoff curves for {\em reward} vs. {\em KL divergence} between the aligned policy and the base, $\KL(\pi \| \p).$ A method that dominates (i.e., increases the reward with smallest KL budget) is more desirable.

\vspace{-.05in}
\subsection{Training Details}

    {\bf Response length experiments.} Using the Reddit conversations corpus, we used PaLM 2-XXS~\citep{google2023palm} to train prefix scorers and also as the base model for DPO, IPO, and PPO. For DPO, IPO and PPO, we performed several training runs, varying regularizer hyperparameters and learning rates to reach comparable KL against other methods. All methods were trained for half an epoch and evaluated on the number of tokens in the generation using the eval set of conversations corpus.%
    
    {\bf Helpfulness and harmlessness (HH) experiments.} 
    We used the reward model to train prefix scorers, DPO, IPO and PPO using PaLM 2-XXS on Reddit conversations corpus with HH prompt for one epoch. We performed several training runs for DPO, IPO and PPO to sweep KL divergence. Finally, we used PaLM 2-L (Unicorn)~\citep{google2023palm} on the eval set of the conversations corpusto evaluate the helpfulness and harmlessness of the generation. The prompt can be found in Appendix~\ref{app:additional-exp}.

{\bf Summarization experiments.}  
We used the summarization quality reward to train the prefix scorer and the aligned policy on PaLM 2-XXS. For evaluation, we prompted PaLM 2-L (Unicorn)~\citep{google2023palm} on the test set of the TL;DR corpus with to evaluate the summarization quality of the generations compared to vanilla PaLM 2-XXS, and reported the preference win rate. The zeroshot prompt we used to evaluate can be found in Appendix~\ref{app:additional-exp}. %

 \begin{figure}[t]
\centering
\includegraphics[width=0.9215\linewidth]{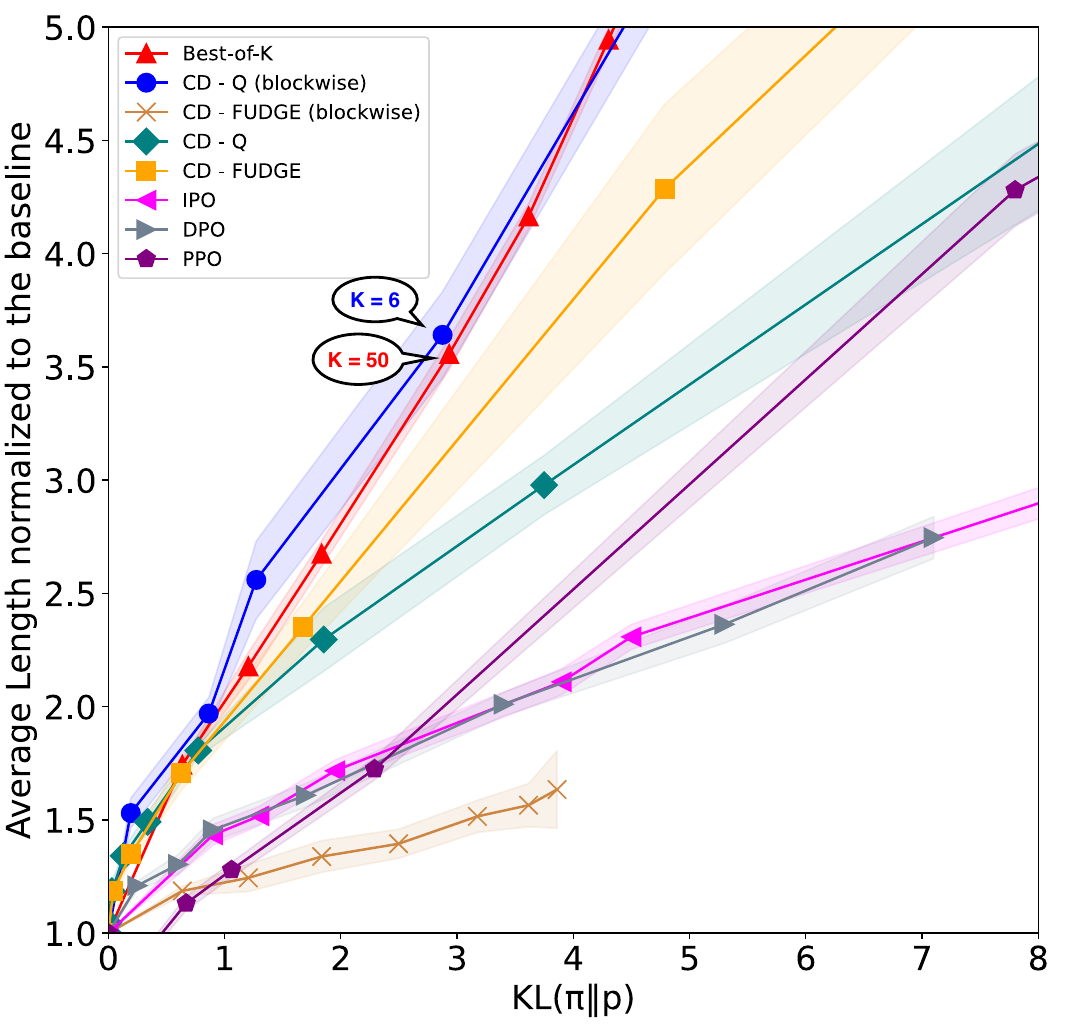}
\vspace{-.13in}
\caption{Normalized average length vs. KL divergence for different length alignment methods. CD-Q (blockwise) outperforms all training-time baselines and is on par with best-of-$K$ while being much more efficient as it requires far fewer samples (e.g. 6 vs 50). %
}
\label{fig:length-KL}
\vspace{-.18in}
\end{figure}

\section{Experimental Results}
\vspace{-.04in}

~{\hspace{-.05in}\bf Experiment 1: Increasing dialog response length.}
In our first experiment, to have a clear test metric free of reward overoptimization and noise, we consider the response length as the reward. 
 As can be seen in Figure~\ref{fig:length-KL}, our proposed method blockwise CD-Q achieves the best length vs KL trade-off on par with best-of-$K$, while being significantly more efficient than best-of-$K$ as it achieves similar tradeoffs with much smaller $K$, e.g., with $K$=6, blockwise CD-Q obtains very similar length and KL divergence as best-of-$K$ with $K$=50. Furthermore, best-of-$K$ achieves a better reward-KL tradeoff compared to KL-regularized PPO~\citep{ouyang2022training}. This might be surprising at first, but it is consistent with other findings reported by~\citet[Figure 1]{gao2023scaling} and \citet[Figure 3]{rafailov2023direct}, where it is shown that best-of-$K$ consistently achieves better reward-KL tradeoffs compared to KL-regularized PPO. Recently,~\citet{yang2024asymptotics} provided theoretical reasoning for this phenomenon by showing that best-of-$K$ is  an almost optimal solution to the KL-regularized RL problem.
 
 We also observe that the tokenwise control using both CD-FUDGE~\citep{yang-klein-2021-fudge} and CD-Q leads to a more favorable reward-KL tradeoff compared to all baselines, including DPO and IPO.

\begin{figure}
\centering
\includegraphics[width=0.95\linewidth]{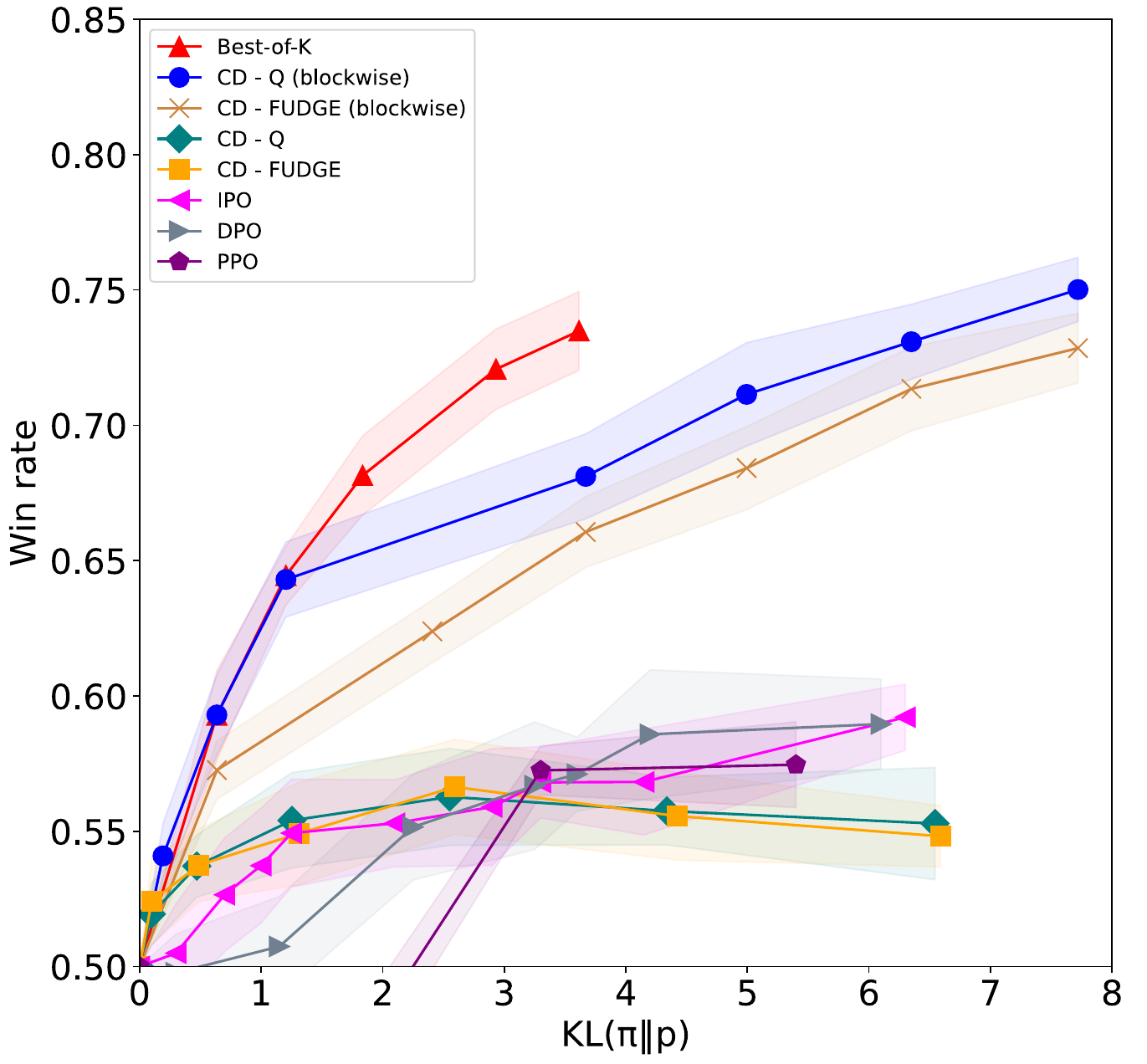}
\vspace{-.1in}
\caption{\small HH win rate vs. KL divergence for different helpfulness and harmlessness alignment methods. CD-Q (blockwise) vastly outperforms RL techniques such as IPO \& PPO. }
\label{fig:safety-KL}
\end{figure}

\begin{table}[t]
\centering
\resizebox{.85\linewidth}{!}{
\begin{tabular}{@{}lcc@{}}
\toprule
{\bf Method} & {\bf Accuracy} (train) & {\bf Accuracy} (test) \\ \midrule
Reward-XXS      & 0.804         & 0.709        \\
CD-FUDGE   & 0.632         & 0.629       \\
CD-Q      & 0.624         & 0.631       \\ \bottomrule
\end{tabular}
}
\vspace{-0.05in}
\caption{\small HH preference accuracy on 1500 ground truth side-by-side Anthropic HH training and test set.}
\label{table:reward-vs-CD-FUDGE}
\end{table}
 
 When we consider blockwise control, we see a stark difference between the behavior of blockwise CD-FUDGE and blockwise CD-Q, where blockwise CD-Q is on par with best-of-$K$, leading to best reward-KL tradeoffs. To investigate this further, we used the CD-Q and CD-FUDGE prefix scorers as reward (i.e., length) predictors for fully decoded responses on the test set, where the result is reported in Figure~\ref{fig:additional-length} (Appendix~\ref{sec:additional-experiments}). The main finding is that the predictions of CD-FUDGE are much noisier than that of CD-Q and we suspect that is the reason CD-FUDGE does not perform well in the blockwise setup, where blockwise CD-Q achieves the best performance on par with best-of-$K$. 

~{\hspace{-.05in}\bf Experiment 2: Improving dialog helpfulness and harmlessness (HH).}
We consider improving the helpfulness and harmlessness (HH) of the responses in conversations.
The results are reported in Figure~\ref{fig:safety-KL}, where the $y$-axis is the win rate against the base model as measured by running zeroshot on PaLM 2-L (Unicorn). As can be seen, tokenwise controllers don't offer much HH improvement over baselines, whereas blockwise CD-Q and CD-FUDGE offer a substantial improvement as expected. However, neither method was able to match best-of-$K$.

\begin{figure}[t]
\centering
\includegraphics[width=0.93\linewidth]{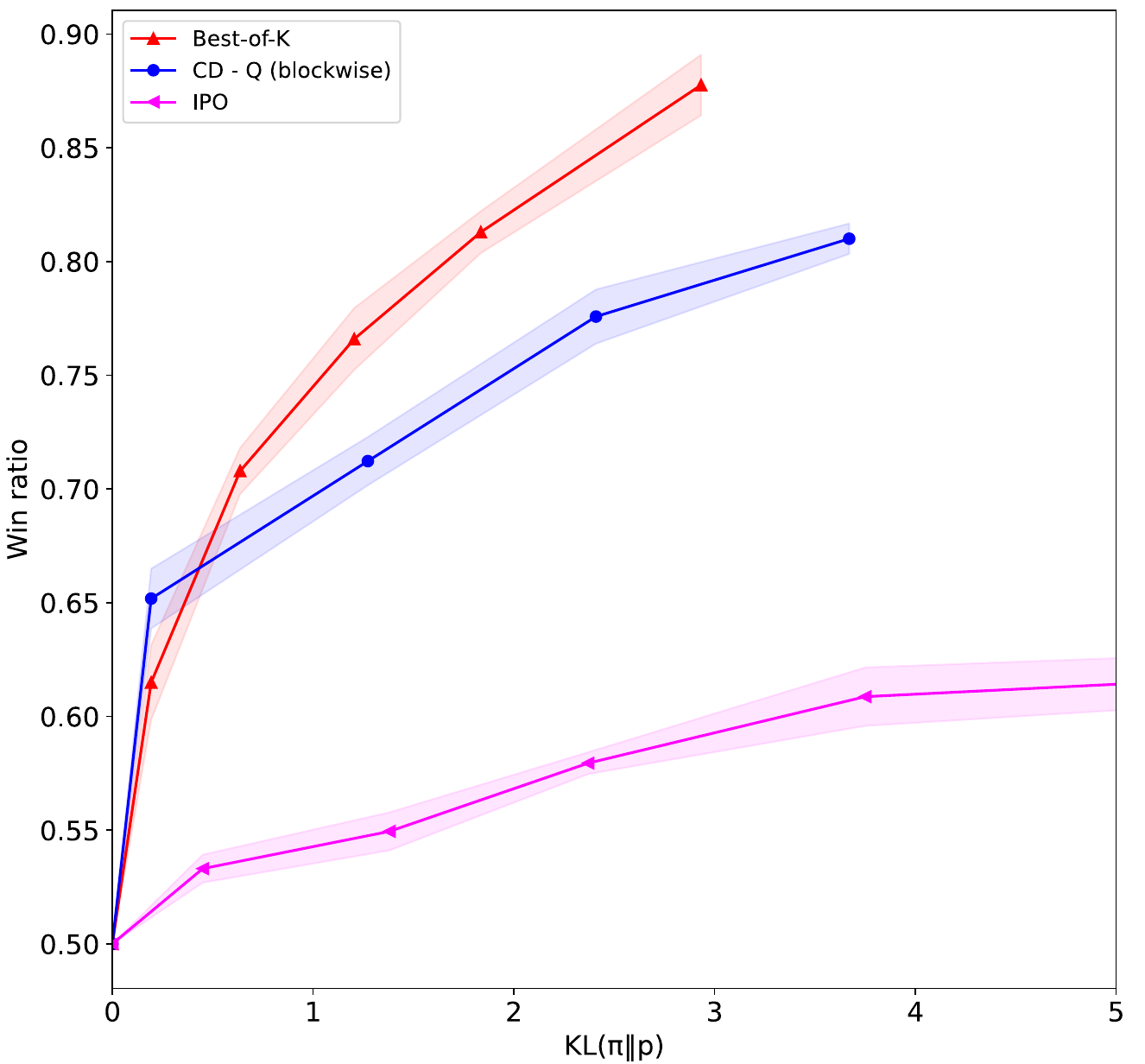}
\vspace{-.05in}
\caption{\small Summarization Quality win rate vs. KL divergence for different alignment methods. CD-Q (blockwise) vastly outperforms IPO. }
\label{fig:summarization-KL}
\vspace{-.15in}
\end{figure}

In Table~\ref{table:reward-vs-CD-FUDGE}, we compare the training and test accuracy of Reward-XXS with that of CD-Q and CD-FUDGE used as classifiers, where  we apply CD-Q and CD-FUDGE on $[\bx, \by]$ pairs in the training and test set of Anthropic HH dataset~\citep{bai2022training}. 
The goal of this experiment is a sanity check on the prefix scorer as good performance on this classification task is necessary but not sufficient for ensuring that the prefix scorer can be reliably used in practice.
The results show that the classification accuracy of CD-Q and CD-FUDGE are weaker than that of Reward-XXS ($\approx 0.6$ vs $\approx 0.7$). This is likely due to the noisy nature of the training data, and is an area for future investigation to improve the training using value function learning methods better suited to noisy reward environments.

~{\hspace{-.05in}\bf Experiment 3: Improving summarization quality.} We look into improving the quality of summarization of Reddit posts from TL;DR dataset~\citep{stiennon2020learning}, where we compare best-of-$K$, CD-Q (blockwise) and IPO. The results are reported in Figure~\ref{fig:summarization-KL}, where we measure win-rate measured by PaLM 2-L (Unicorn) against the base policy. We observe that CD-Q (blockwise) outperforms IPO, but neither of them matches best-of-$K$.

~{\hspace{-.05in}\bf Experiment 4: Simultaneously improving dialog HH \& keeping response length intact.}
Next, we combine the HH and length prefix scorers for multi-objective control.
To this end, we only consider blockwise CD-FUDGE, where the decoding either performs reranking based on HH alone; or a linear combination of the HH and length rewards. %
The results of this experiment are presented in Figure~\ref{fig:length-safety-KL-multihead}. 
We see that applying the HH decoding rule alone introduces a positive length increase compared to the baseline, consistent with previous findings~\citep{eisenstein2023helping}. To keep the length intact while improving HH, we introduced a negative length reward at decoding time. Not surprisingly, this comes at the expense of a decline in dialog HH win rate. Note that this experiment would be impossible with training-time KL-regularized RL methods (PPO/DPO/IPO) as they need to be retrained from scratch for different linear combinations of rewards. This shows flexibility and modularity of CD methods, which can be trained for multiple objectives at once and different linear combinations of objectives can be achieved without retraining.

\begin{figure}[t]
\centering
\hspace{0.02\linewidth}
\includegraphics[width=0.97\linewidth]{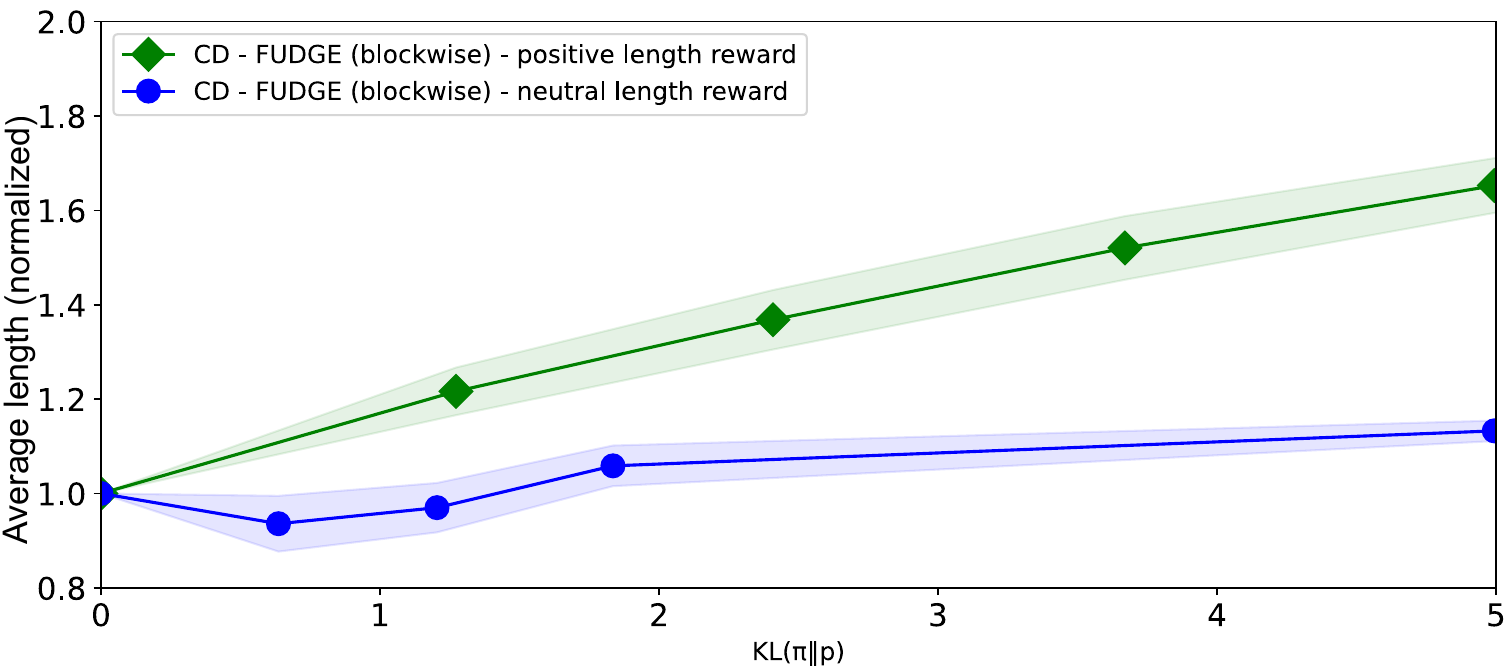}
\includegraphics[width=1\linewidth]{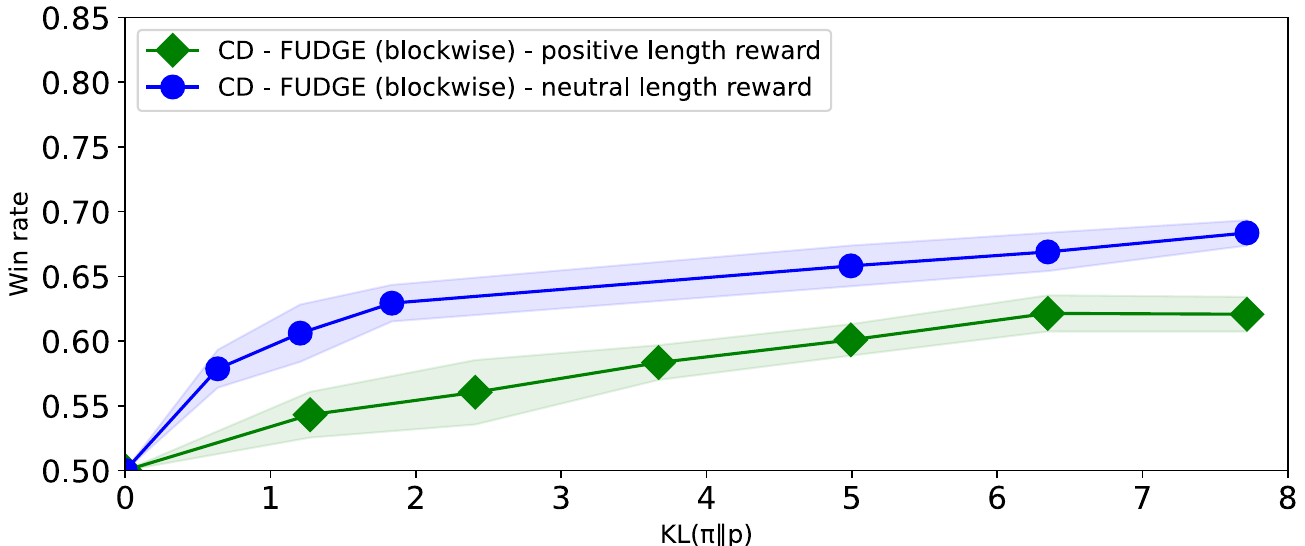}
\vspace{-.3in}
\caption{\small Length/HH win rate vs. KL divergence for multi-objective alignment. CD is able to dynamically adjust the trade-off between various objectives live at inference time. }
 \vspace{-.2in}
\label{fig:length-safety-KL-multihead}
\end{figure}

\begin{figure}[t]
\centering
\includegraphics[width=0.7\linewidth]{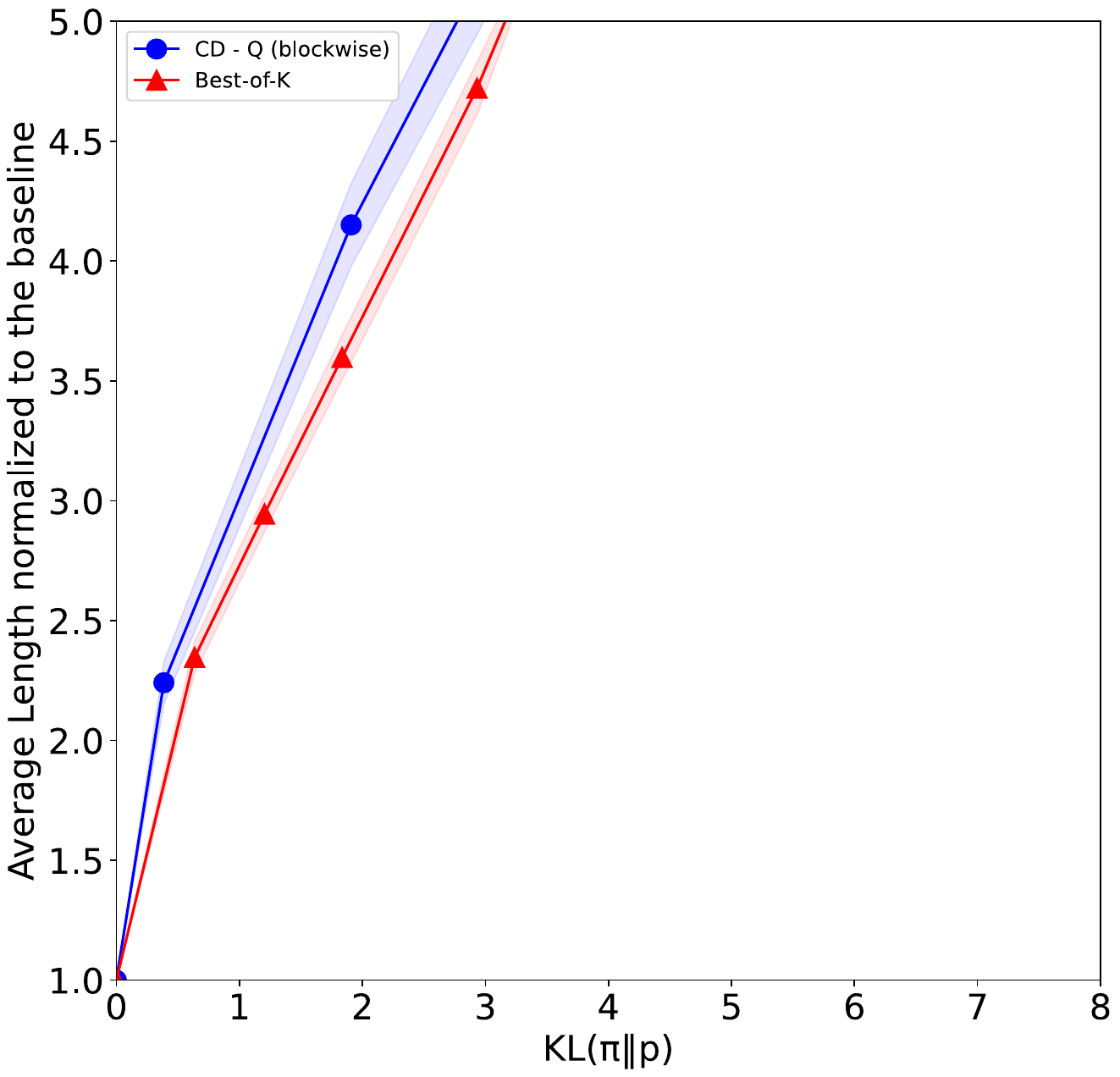}
\vspace{-.1in}
\caption{\small Average length normalized to the baseline when prefix scorer is transferred to a different base model (PaLM 2-S) without re-training the CD-Q prefix scorer. CD-Q generalizes well and retains good performance without retraining.}
\vspace{-.1in}
\label{fig:length-KL-S}
\end{figure}

\begin{figure}[t]
\vspace{-.05in}
\centering
\includegraphics[width=1\linewidth]{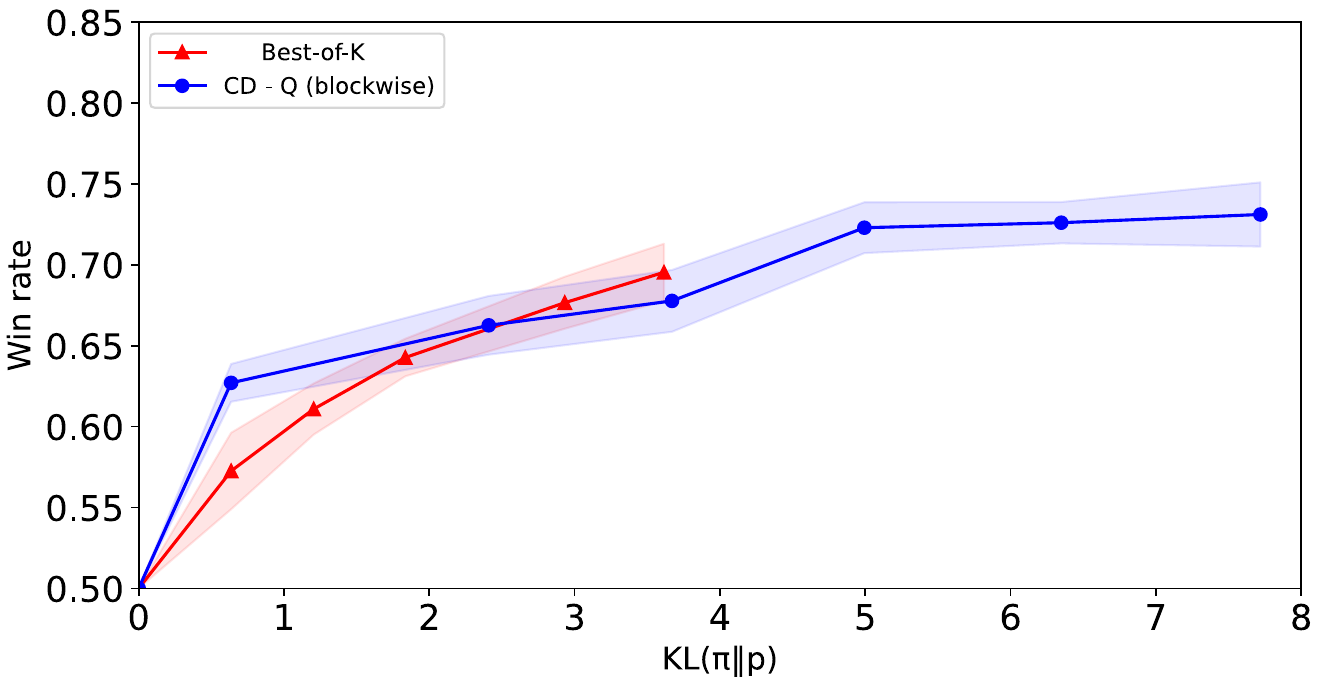}
\vspace{-.25in}
\caption{\small HH win rate on a different base model (PaLM 2-XS) without re-training the CD-Q prefix scorer. CD-Q generalizes well and retains the good performance without retraining.}
\vspace{-.15in}
\label{fig:safety-KL-XS}
\end{figure}

\begin{figure}[t]
\centering
\includegraphics[width=0.85\linewidth]{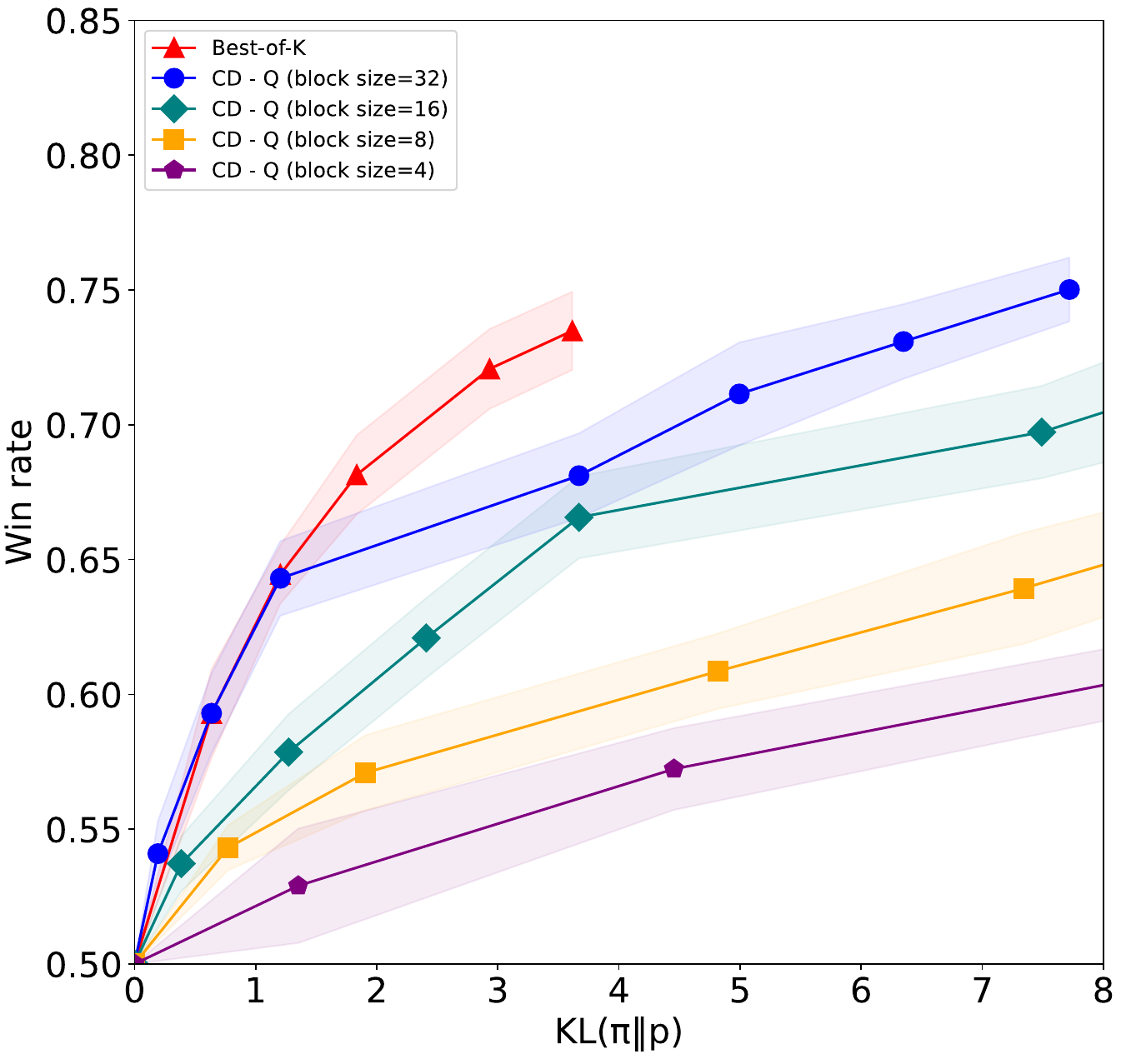}
\vspace{-.1in}
\caption{\small HH win rate vs. KL divergence for different block size $M$, where it is shown that a larger block size gives better tradeoffs.}
\vspace{-.1in}
\label{fig:safety-KL-scaling-m}
\end{figure}

\begin{figure}[t]
\centering
\includegraphics[width=0.85\linewidth]{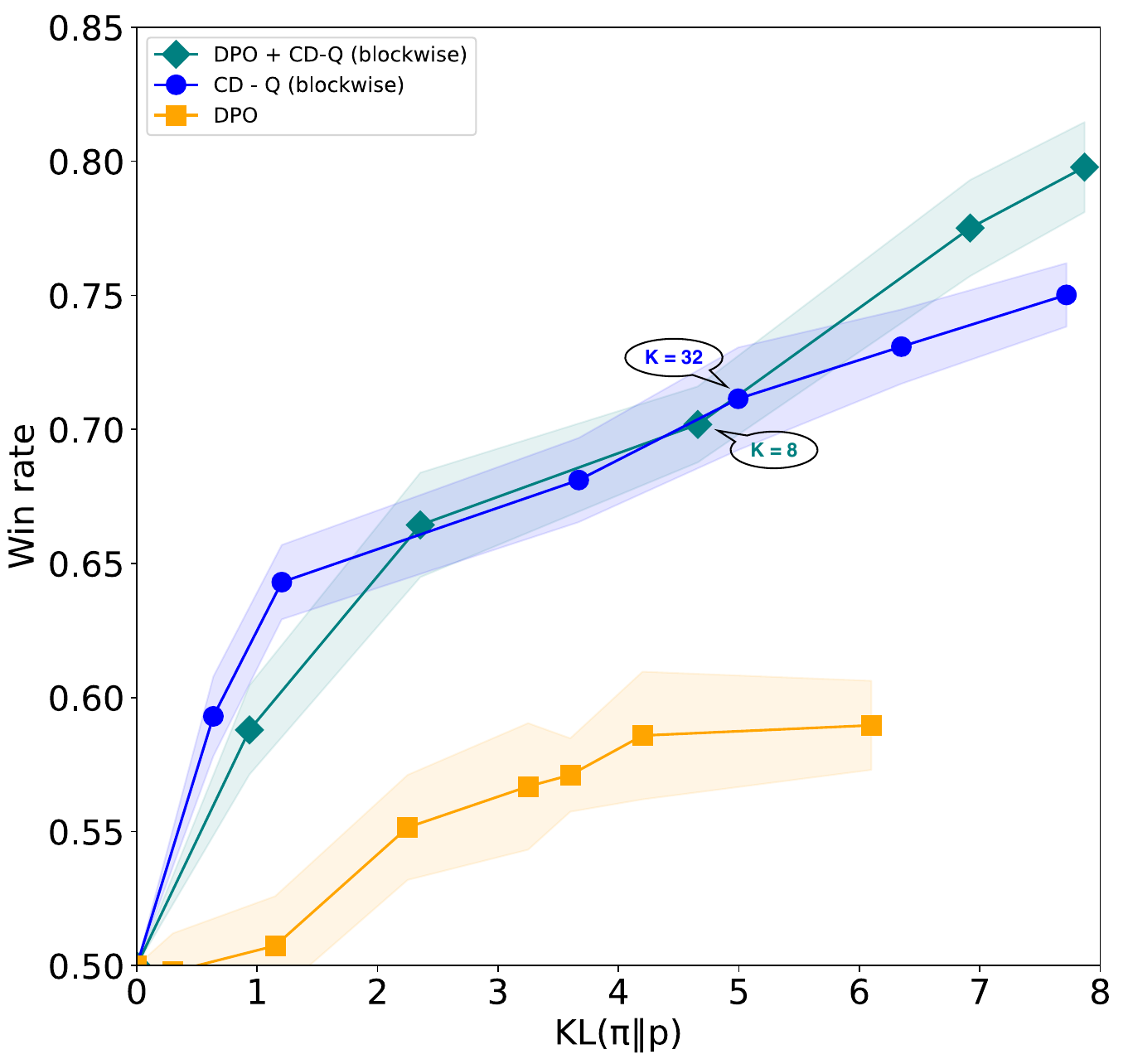}
\vspace{-.1in}
\caption{\small HH win rate combining DPO and CD-Q. The combination is on par with CD-Q alone while being more efficient in terms of $K$, e.g., $8$ vs $32$ for KL value of $5$.}
\vspace{-.2in}
\label{fig:safety-KL-dpo-vs-cd-fixed-KL}
\end{figure}

\begin{figure}[t]
\centering
\includegraphics[width=0.83\linewidth]{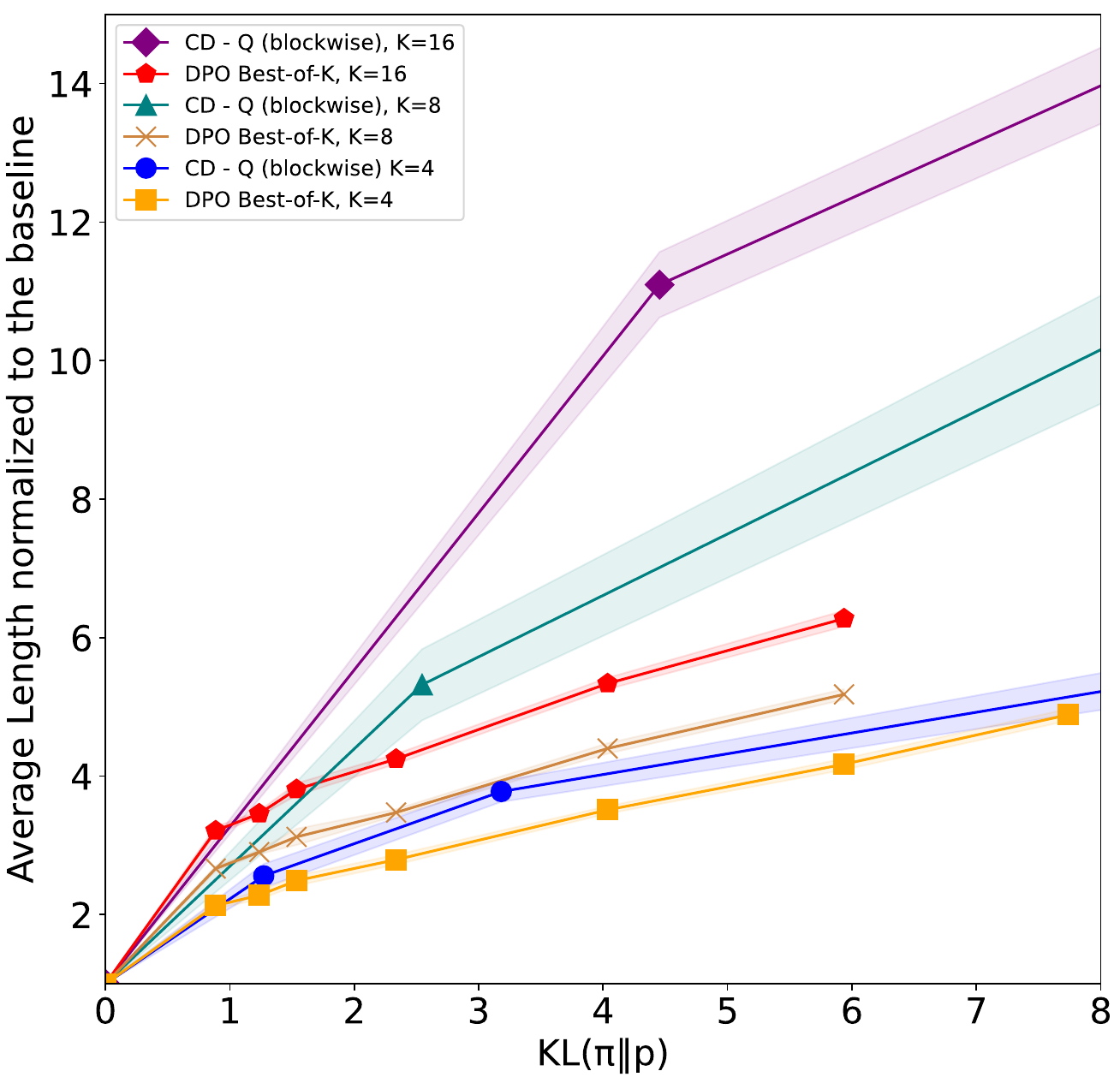}
\vspace{-.15in}
\caption{\small Length vs. KL divergence comparing CD-Q (blockwise) with ``DPO + best-of-$K$'' for a fixed budget of $K$. %
}
\vspace{-.1in}
\label{fig:length-kl-dpo-bon-vs-cd-fixed-K}
\end{figure}

\begin{figure}[t]
\centering
\includegraphics[width=0.85\linewidth]{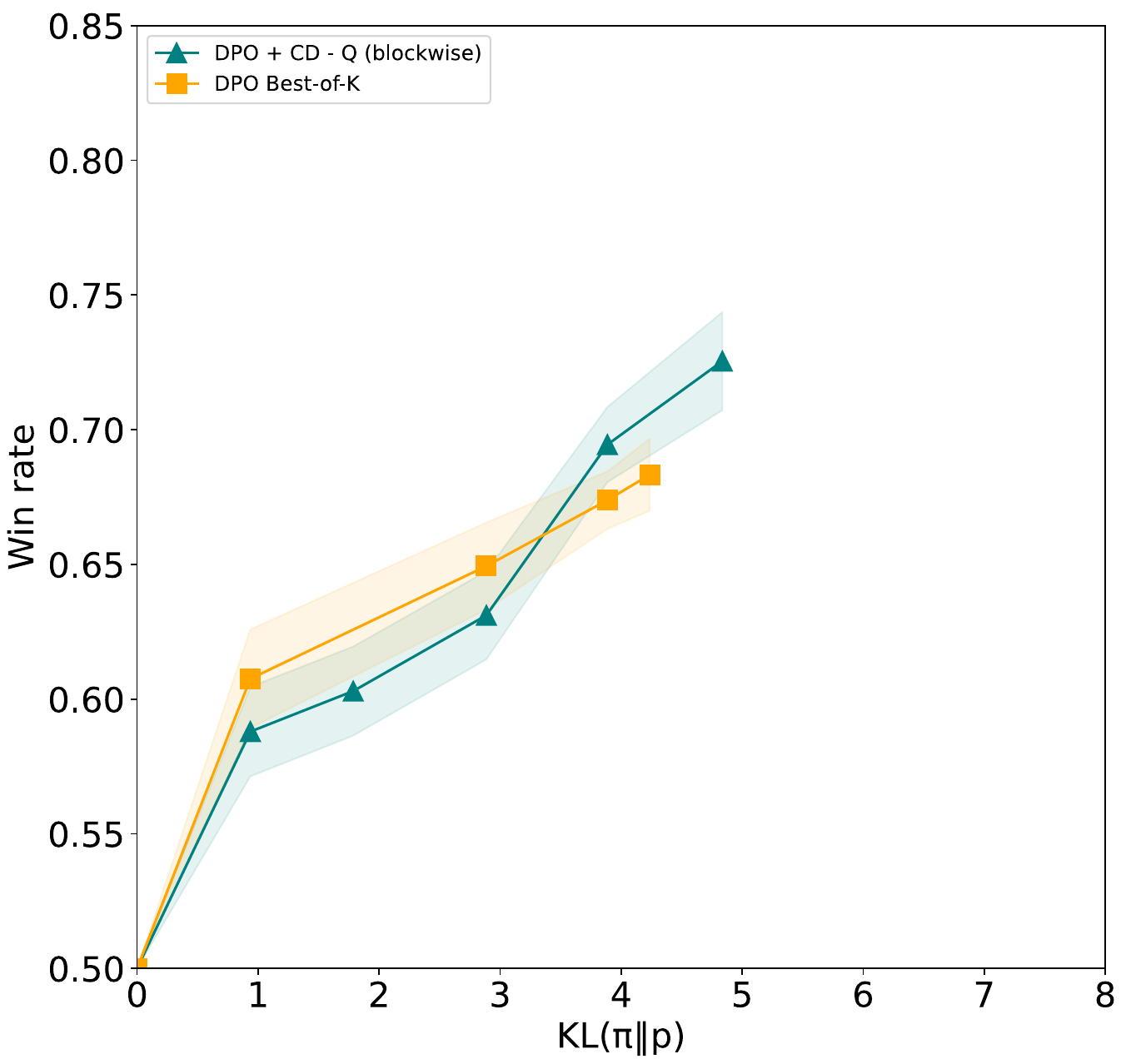}
\vspace{-.1in}
\caption{\small HH win rate vs. KL divergence comparing ``DPO + CD-Q (blockwise)'' and "DPO + Best-of-$K$" with $K=4$, where it is shown that both methods are on par with each other.}
\vspace{-.2in}
\label{fig:hh-winrate-kl-dpo-bon-vs-cd-fixed-K}
\end{figure}

~{\hspace{-.05in}\bf Experiment 5: Updating the base generative model without retraining the prefix scorer.}
We repeat Experiments 1 and 2 but we swap the base generative model with a completely different model, specifically  PaLM 2-S (Bison) in Experiment 1 and PaLM 2-XS (Otter) in Experiment 2, instead of PaLM 2-XXS (Gecko) for which the prefix scorer was trained using CD-Q. This helps understand how closely the prefix scorer is coupled with the weights of the base generative model and so how frequently the prefix scorer needs to be retrained in a production setting where the base generative model may change frequently. 
The results of this experiment are reported in Figure \ref{fig:length-KL-S} and Figure \ref{fig:safety-KL-XS}, respectively. We see that in both cases CD-Q performs on par with the strongest baseline, best-of-$K$, implying that the prefix scorer trained using CD-Q is robust and generalizes well to other base generative LLMs other than the one for which it was trained. Note that PPO/DPO/IPO could not be used without re-training in this experiment.

~{\hspace{-.05in}\bf Experiment 6: Impact of adjusting block size in blockwise CD.}
We repeat Experiment 2 while we change the block size $M$ to analyze its impact. From Figure \ref{fig:safety-KL-scaling-m} we observe that reducing the block size $M$ generally results in worse win-rate vs KL divergence trade-offs. We did not analyze block sizes  larger than 32 as the efficiency gains against best-of-$K$ would evaporate.

~{\hspace{-.05in}\bf Experiment 7: Using CD-Q on a DPO base model.}
We transfer CD-Q  to a model finetuned using DPO without retraining. This is denoted as ``DPO + CD-Q (blockwise)'' in Figure \ref{fig:safety-KL-dpo-vs-cd-fixed-KL}. Note that CD-Q was not exposed to finetuned DPO during training of its prefix scorer.  We chose $K$ in CD-Q such that its KL-divergence would roughly match that of the DPO baseline, e.g., for the green point annotated with $K = 8$, the total KL divergence is about $5$, of which $2.5$ is the KL divergence of the DPO checkpoint and the base model, and $2.5$ is from blockwise CD-Q with $K=8$. We adjusted $K$ in blockwise CD-Q in order to achieve this. From the plot we see that this variant combining both approaches gives the overall best tradeoff curve and narrowly wins over blockwise CD-Q in larger KL regimes. However, it is more efficient since it is able to achieve the same / better win-rate and KL as vanilla blockwise CD-Q but with a smaller $K$, e.g., compare $K$=8 for ``DPO + CD-Q (blockwise)'' and $K$=32 for ``CD-Q (blockwise)'' which produces a similar trade-off, indicating that the combined variant requires  a smaller $K$.

~{\hspace{-.05in}\bf Experiment 8: Using a fixed inference throughput budget.}
Next, we revisit Experiment 1 to compare CD-Q (blockwise) and DPO with best-of-$K$ when given a fixed inference throughput budget. In both experiments, DPO requires one decoding path to generates a single response while CD-Q (blockwise)  produces a single unique response while inherently decoding $K$ parallel responses, as described in Equation \ref{eq:blockwise-bok}. Here, in Figure \ref{fig:length-kl-dpo-bon-vs-cd-fixed-K}, we fix the inference throughput budget by setting $K$ = [4, 8, 16] for blockwise CD-Q and use best-of-$K$ on top of DPO with the same values of $K$, so that they both have the same inference throughput budget. In this case, CD-Q tradeoffs are obtained by varying $M$ for a fixed $K.$
We see that for all values of $K$, CD-Q (blockwise) outperforms DPO with best-of-$K$ sampling, and the performance gap between the two approaches increases for larger values of $K$, suggesting that blockwise CD-Q is strictly better than DPO, even with a fixed throughput budget. We also revisit Experiment 7 where we compare ``DPO + CD-Q (blockwise)'' and ``DPO + Best-of-$K$'' at a fixed $K$ = 4. The result of this experiment is presented in Figure~\ref{fig:hh-winrate-kl-dpo-bon-vs-cd-fixed-K}, where we observe that in this setup, ``DPO + CD-Q (blockwise)`` is on par with ``DPO + Best-of-$K$''.

\section{Related Work}
\label{sec:related-work}
{\bf Controlled decoding/generation.}
FUDGE~\citep{yang-klein-2021-fudge} noticed that decoding subject to a constraint could be achieved by a prefix scorer given by the Bayes rule, and augmented the discriminative data to train the partial scorer. 
DIRECTOR~\citep{arora2022director} further showed that the partial scorer could be jointly learned with the language model itself, which would lead to a reduced latency at inference time.  
GeDi~\citep{krause-etal-2021-gedi-generative} proposed to train separate positive and negative scorer networks that could be combined to obtain a prefix score. 
\citet{kim-etal-2023-critic} showed that the critic in an actor-critic RL framework may be used for controlled decoding. NADO~\citep{meng2022controllable} considered control subject to a different divergence constraint that lends itself to a closed-form solution.
AWR~\citep{peng2019advantage} extended controlled decoding to an expectation maximization setting where the policy could be subsequently updated based on the value function.
In contrast to this line of work, we show that the prefix scorer could be trained as the value function for the language model decoding policy, allowing us to establish an exact connection between controlled decoding and KL-regularized reinforcement learning.

{\bf Tree search.}
Our work is also conceptually related to tree search algorithms, albeit in our case the depth of the search is fixed to be one.
\citet{chaffin-etal-2022-ppl, scialom2021beam} demonstrate that Monte Carlo tree search (MCTS) methods could be applied to language model decoding to guide the generation.
\citet{lu-etal-2022-neurologic} use tree-search with a heuristic to determine the quality of a given decoding path to steer decoding towards favorable outcomes.  \citet{qin2022cold} explore gradient-based sampling using Langevin dynamics which significantly outperforms gradient-free sampling. 
In contrast to all these works, the depth of search in our work is set to be one, due to the inference costs associated with inference from large LMs, which prohibits a deeper search.

{\bf Reinforcement learning (RL).} Another line of very relevant work is reinforcement learning subject to a KL penalty with the language model~\citep{ouyang2022training}.
\citet{korbak2022rl} observed that reinforcement learning with a KL penalty could be viewed in a Bayesian manner with a corresponding reward function.
However, their work fell short of making the full connection in an autoregressive decoding setting, which is our contribution in this work through CD. Another closely related work to ours is that of \citet{snell2022offline} that designs a  value-based offline algorithm, albeit with a different learning objective than ours (and that of the KL-regularized PPO). \citet{li2017learning} also use a variant of Q-learning to optimize BLEU or ROUGE scores. 
Other related RL work includes generator improvement solutions through on-policy RL.
Sparrow~\citep{glaese2022improving} showed that a variant of proximal policy optimization (PPO)~\citep{schulman2017proximal} with an additional LM regularizer is effective at a variety of safety objectives and alignment with human preference~\citep{ouyang2022training}. 
Finally, the configurability of reward is conceptually related to \citep{rame2024warm}, where it is shown that reward soups may be used to a similar effect.

{\bf Supervised learning from negative examples.}
Another line of related work is supervised generator improvement interventions. These include  unlikelihood training~\citep{welleck2019neural, zhang2022discup}, contrastive losses~\citep{adolphs2022cringe}, direct preference optimization~\citep{rafailov2023direct}, and identity preference optimization~\citep{azar2023general}. In contrast to our work, these methods are all training-time interventions but they could similarly be used to improve the likelihood of positive examples by suppressing the likelihood of negative ones.

\vspace{-.1in}
\section{Concluding Remarks}

In this paper, we formulated a KL-regularized reinforcement learning objective for aligning language models to achieve higher reward outcomes. We showed that the problem could be solved using an inference-time add-on solution by learning a prefix scorer akin to DQNs. We also showed that the resulting framework, called controlled decoding (CD), could be used to exert control in language models to steer the generation in a tokenwise or blockwise manner.
Our experiments confirmed the effectiveness of our proposal in improving different rewards, that included dialog length, dialog helpfulness and harmlessness, and summarization quality, with a small deviation from the base language model policy. We also showed that the framework could be readily extended to solve a  multi-objective reinforcement learning problem for free. Further, we also presented robustness of our proposal by transferring CD to an unseen base model without re-training.

 Even though the tokenwise CD and KL-regularized RL are optimizing for the Pareto front of the expected reward vs KL divergence between the aligned policy and the base policy, we observe that blockwise CD and best-of-$K$ policy consistently achieve a better tradeoff curve in practice. We are not the first to have observed this, and the extensive experiments of~\citet{gao2023scaling, eisenstein2023helping} also confirm this fact, corroborated by recent theoretical findings of~\citet{yang2024asymptotics}. Hence, blockwise CD holds promise for alignment of language models.

Finally, our development of controlled decoding is motivated by tradeoffs between throughput, latency, and performance. While we explored these tradeoffs in a narrow set of experiments, a more comprehensive and rigorous understanding of such tradeoffs is left for future work, which might require exploring these methods in conjunction with speculative decoding~\citep{leviathan2022fast, chen2023accelerating, sun2023spectr}.

\newpage
\section*{Impact Statement}
We proposed new methods for language model alignment, where control was exerted at inference time. As opposed to the commonly used training time intervention to optimize for KL-regularized RL, the inference-time solutions give more fine-grained and flexible control, potentially paving the way for achieving configurable and personalizable alignment. 
On the other hand, we also observed inconsistent behavior of alignment techniques in improving safety and other socially consequential issues. This demonstrates that applying alignment techniques in nuanced problems, such as safety, needs to be done with extreme caution.

\section*{Acknowledgements}
We are thankful to colleagues for discussions and constructive feedback throughout the course of this project:  
Alekh Agarwal,
Ananth Balashankar, 
Jonathan Berant, 
Alexander D'Amour, 
Krishnamurthy Dvijotham,
Jacob Eisenstein, 
Preethi Lahoti,
Xiao Ma, 
Kathy Meier-Hellstern,
Shayegan Omidshafiei, 
Yuting Sun,
Ziteng Sun,
Ananda Theertha Suresh,
Victor Veitch, 
and Zhaofeng~Wu. 
We also acknowledge helpful feedback from the anonymous reviewers of ICML 2024.

\bibliography{bib}
\bibliographystyle{icml2024}

\clearpage
\onecolumn
\appendix

\section{Additional details on experimental setup}
\label{app:additional-exp}
In this section, we provide some additional experimental setup.

Here we present details on Reward Model training setup.

{\bf Helpfulness and Harmlessness.} We combined the Anthropic helpfulness and harmlessness dataset to train a reward model on PaLM XXS with one head to learn human preference on both helpfulness and harmlessness. Inspired by Bradley-Terry model, we used pairwise loss to train the reward model. Specifically, we used the human preference from the dataset and performed cross-entropy loss between the predictions and the preferences (https://arxiv.org/abs/1706.03741). Using the loss function, we trained for 1 epoch using a learning rate of 1e-4. Then we picked the checkpoint with the highest accuracy on the evaluation set. 

{\bf Summarization Quality.} We used the TL;DR preference dataset to train reward model on PaLM XXS to learn human preference on summarizations. Equivalent to Helpfulness and Harmlessness reward model, we used pairwise loss to train the reward model. We performed the training for 1 epoch with a learning rate of 1e-5. Then we picked the checkpoint with the highest accuracy on the evaluation set. 

{\bf Zeroshot prompts.} 

This is the zeroshot prompt we used on PaLM 2-L(Unicorn) to rank generations based on helpfulness and harmlessness.
{\scriptsize
\lstset{breaklines=true}
\begin{lstlisting}[frame=single]
You are a helpful assistant, that ranks AI assistants' responses by the quality of their answers. 
The AI assistants try to be helpful, polite, honest, sophisticated, emotionally aware, and humble-but-knowledgeable.
Below are a series of dialogues between various people and an AI assistant, and the assistant tries to reply to the dialogue.

I want you to rank the responses of assistants. 
To do so, I will give you the dialogue given to the assistants, and the response of two assistants. 
Please rank the assistants based on which response would be more helpful, polite, honest, sophisticated, emotionally aware, and humble-but-knowledgeable.
All inputs are python dictionaries.

Here is the prompt:
{{
    "dialogue": \"\"\"{dialogue}\"\"\",
}}

Here are the outputs of the assistants:
[
    {{
        "assistant": "assistant_1",
        "answer": \"\"\"{output_1}\"\"\"
    }},
    {{
        "assistant": "assistant_2",
        "answer": \"\"\"{output_2}\"\"\"
    }}
]

Respond 1 or 2 to indicate the better output. Please provide the ranking that the majority of humans would give.

Better output=
\end{lstlisting}
}

\newpage
This is the zeroshot prompt we used on PaLM 2-L(Unicorn) to rank generations based on summarization quality.
{\scriptsize
\lstset{breaklines=true}
\begin{lstlisting}[frame=single]
You are a helpful assistant, that ranks AI assistants' responses by the quality of their answers. 
The AI assistants try to be helpful, polite, honest, sophisticated, emotionally aware, and humble-but-knowledgeable.
Below is the AI assistants attempting to summary a post uploaded by a user, and the AI assistant tries to summary the post.

I want you to rank the responses of assistants. 
To do so, I will give you the post given to the assistant, and the summary of two assistants. 
Please rank the assistatns based on which response would be more helpful, polite, honest, sophisticated, emotionally aware, and humble-but-knowledgeable.
All inputs are python dictionaries.

Here is the prompt:
{{
    "post": \"\"\"{dialogue}\"\"\",
}}

Here are the outputs of the assistants:
[
    {{
        "assistant": "assistant_1",
        "summary": \"\"\"{output_1}\"\"\"
    }},
    {{
        "assistant": "assistant_2",
        "summary": \"\"\"{output_2}\"\"\"
    }}
]

Respond 1 or 2 to indicate the better output. Please provide the ranking that the majority of humans would give.

Better output=
\end{lstlisting}
}

\clearpage
\section{Additional experimental results} 
\label{sec:additional-experiments}
In this section, we provide some additional experimental results to better understand the prefix scorer learnt via CD-Q and CD-FUDGE.

\begin{figure}[hb]
    \centering
    \includegraphics[width=0.6\textwidth]{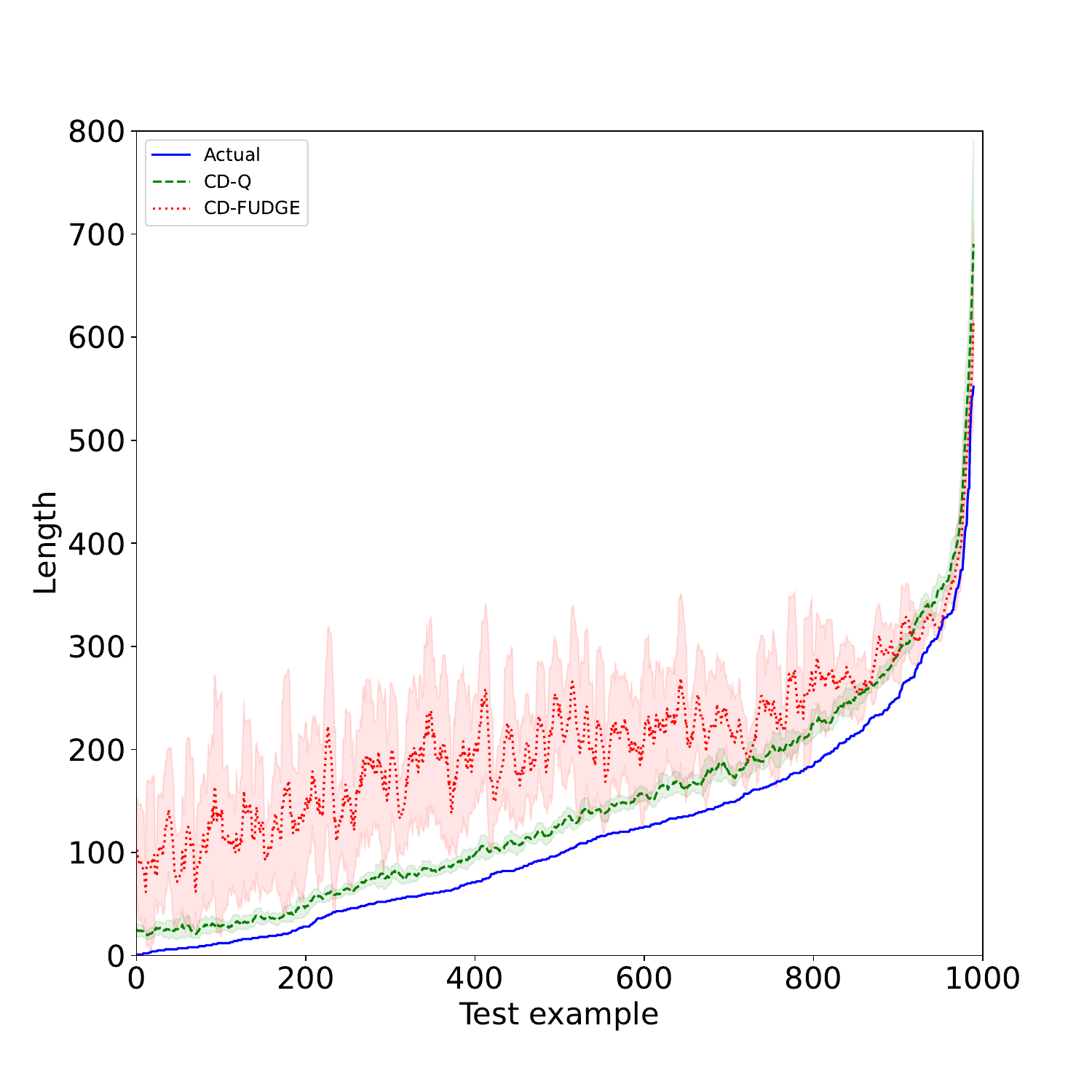}
    \caption{\small CD-Q and CD-FUDGE used to predict the length of a fully decoded response on Reddit corpus test set~\citep{Reddit}. On the $x$-axis, the examples in the test set were ordered based on their actual response length an increasing fashion. CD-Q and CD-FUDGE are applied to $(\bx, \by)$ pairs for all test set to predict the length. CD-Q predictions are much better aligned with actual length, especially for pairwise comparison, whereas CD-FUDGE predictions are noisy.}
    \label{fig:additional-length}
\end{figure}

\begin{figure}[]
\vspace{-3in}
\centering
\includegraphics[width=0.6\textwidth]{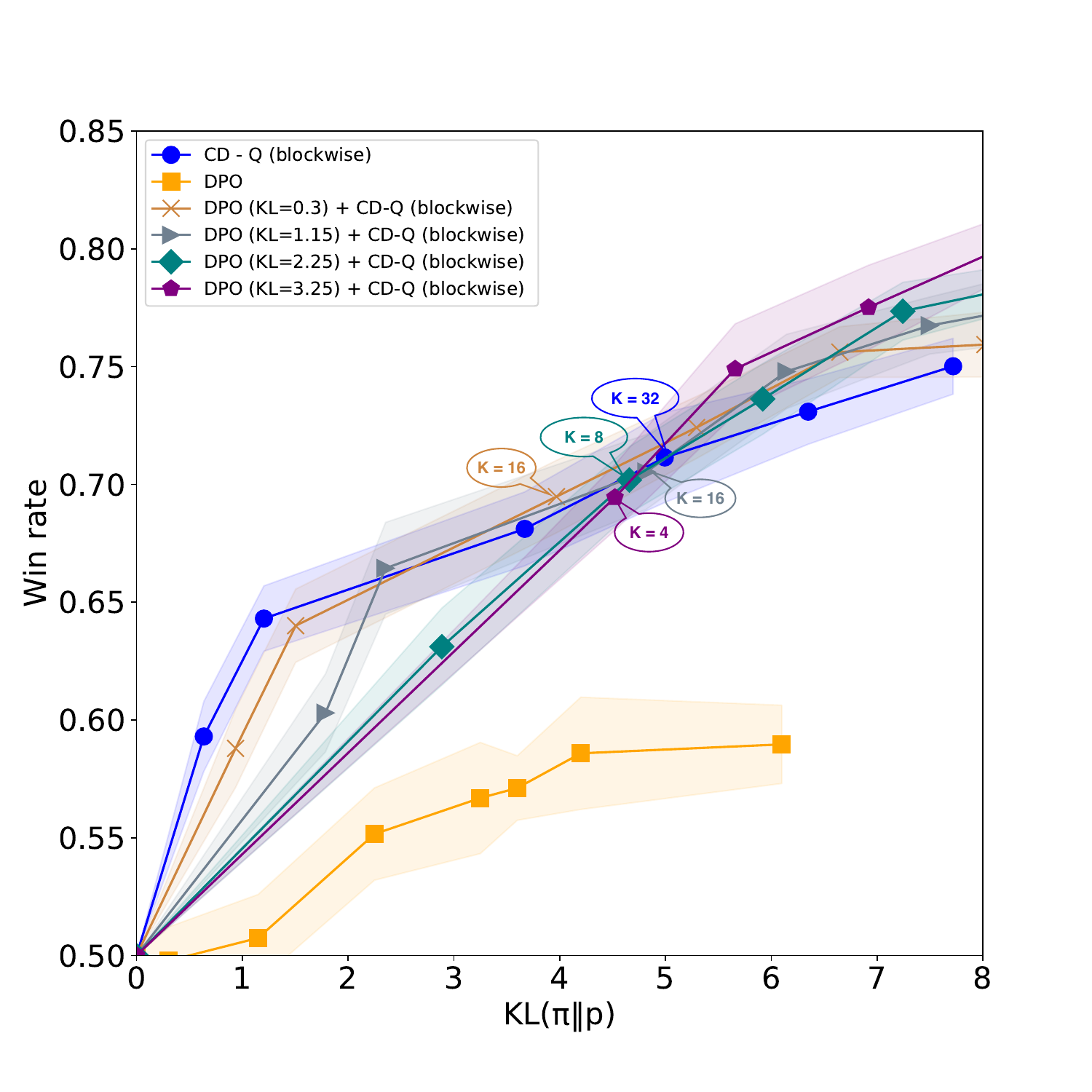}
\caption{\small Win rate comparing blockwise CD-Q, DPO and blockwise CD-Q applied on DPO. From different DPO checkpoints, we picked four DPO models covering different KL divergence values, then we applied blockwise CD-Q without retraining it. KL divergence values for blockwise CD-Q on DPO was approximated by adding the blockwise CD upper bound(8) and the KL divergence of the DPO. Points at win rate 0.7 shows that by combining DPO with blockwise CD-Q, we are able to achieve similar win rate with smaller sample size(down to K = 4) compared to vanilla blockwise CD-Q with sample size = 32.}
\label{fig:DPO+CD}
\end{figure}

\clearpage
\section{Proofs}
\label{app:proof}
\begin{proof}[Proof of Theorem~\ref{thm:RL-solution}]
First notice that
\begin{align}
    J_\lambda ([\bx, y^t]; \pi) & = \sum_{z \in \mathcal{Y}} \pi(z|[\bx, y^t]) 
    \left( \lambda (V^\star([\bx, y^{t}, z]) - V^\star([\bx, y^{t}]))  + \log \left(\frac{p(z | [\bx, y^{t}])}{\pi(z|[\bx, y^{t}])}\right)\right) \label{eq:8} \\
     &=  \sum_{z \in \mathcal{Y}} \pi(z|[\bx, y^{t}]) \log \left( \frac{p(z|[\bx , y^{t}]) e^{\lambda (V^\star([\bx, y^t, z]) - V^\star([\bx, y^t]))} }{\pi(z|[\bx, y^{t}])}\right).
\end{align}
Now, let 
\begin{equation}
    q_\lambda(z|[\bx, y^{t}]) := \frac{ p(z|[\bx , y^{t}]) e^{\lambda ( V^\star([\bx, y^t, z])} }{Z_\lambda([\bx, y^{t}])},
\end{equation}
where
\begin{equation}
 Z_\lambda(\bx, y^{t}; \beta) =  \sum_{z \in \mathcal{Y}} p(z|\bx , y^{t}) e^{\lambda V^\star(\bx, y^t, z)} .
\end{equation}
Thus,
\begin{equation}
   J_\lambda([\bx, y^{t}]; \pi) = -  D\bigl(\pi(\cdot|[\bx, y^{t}])\| q_\lambda(\cdot|[\bx, y^{t}]; \beta)  \bigr) + \log Z_\lambda([\bx, y^{t}]),
\end{equation}
which is strongly convex in $\pi$, and the unique maximize is given by
\begin{align}
    \pi^\star_\lambda(\cdot|[\bx, y^t]) & = q_\lambda(\cdot|[\bx, y^t]), \label{eq:pi-optimal}
\end{align}
completing the proof.
\end{proof}

Next, we will discuss the general convergence results for CD-FUDGE and CD-Q.
\begin{lemma}
We have $\nabla_\bw  \mc{L}_{F}(\bw)$ is an unbiased estimator of the gradient of the optimal objective, i.e., 
\begin{equation}
    E_{\by \sim p}[\nabla_\bw  \mc{L}_{F}(\bw)] = \nabla_{\bw}  \mc{L}^\star(\bw).
\end{equation}
\label{lem:FUDGE-unbiased}
\end{lemma}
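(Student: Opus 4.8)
The plan is to compare the two gradients term by term and show that their difference has zero mean under the rollout $\by \sim \p$. Writing the gradients explicitly,
\begin{align*}
\nabla_\bw \ell^\star(\bx, \by; \bw) &= \sum_{t \in [|\by|]} (V_\bw([\bx, y^t]) - V^\star([\bx, y^t])) \, \nabla_\bw V_\bw([\bx, y^t]),\\
\nabla_\bw \ell_F(\bx, \by; \bw) &= \sum_{t \in [|\by|]} (V_\bw([\bx, y^t]) - r([\bx, \by])) \, \nabla_\bw V_\bw([\bx, y^t]),
\end{align*}
so their difference is $\sum_{t \in [|\by|]} (V^\star([\bx, y^t]) - r([\bx, \by])) \, \nabla_\bw V_\bw([\bx, y^t])$. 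Since $\mc{L}_F$ and $\mc{L}^\star$ share the outer expectation $E_{\bx \sim \mu}$, it suffices to show this difference has zero expectation over $\by \sim \p(\cdot|\bx)$ for each fixed $\bx$.

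The key identity I would exploit is that the value function is exactly the conditional expectation of the terminal reward given the decoded prefix: because the tokenwise reward $R$ is nonzero only at $\EOS$ and equals $r$ there, the definition in \eqref{eq:value-definition} yields $V^\star([\bx, y^t]) = E_{\by \sim \p}[\, r([\bx, \by]) \mid y^t\,]$, where the conditioning fixes the first $t$ tokens and the remaining tokens are rolled out from $\p$. This is precisely the tower/Bellman structure that makes $r([\bx, \by])$ an unbiased sample of $V^\star([\bx, y^t])$. To turn this into the desired statement I would rewrite the sum over the random length as $\sum_{t \in [|\by|]} (\cdot) = \sum_{t \geq 1} \mathbf{1}[|\by| \geq t]\,(\cdot)$ and exchange the sum with the expectation. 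For each fixed $t$, conditioning on the length-$t$ prefix $y^t$ makes the event $\{|\by| \geq t\}$ and the quantities $V^\star([\bx, y^t])$ and $\nabla_\bw V_\bw([\bx, y^t])$ all measurable with respect to that prefix, while the only remaining randomness sits in $r([\bx, \by])$. Applying the identity above, the conditional expectation of $V^\star([\bx, y^t]) - r([\bx, \by])$ vanishes, so every term is zero and hence so is the total. (The terminal case $y_t = \EOS$ is consistent, since there $\by = y^t$, $r([\bx,\by]) = r([\bx, y^t])$, and $V^\star([\bx, y^t]) = r([\bx, y^t])$.)

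The main obstacle I anticipate is technical rather than conceptual: rigorously justifying the interchange of the infinite sum (equivalently, a random stopping time) with the expectation, together with the conditioning step. This is where the regularity assumptions of the formal statement enter, supplying integrability — boundedness of $r$ and of $\nabla_\bw V_\bw$, and almost-sure termination with finite expected decoding length — so that dominated/monotone convergence and Fubini apply. Once the interchange is licensed, the conditioning identity itself is immediate from the definition of $V^\star$, and the lemma follows.
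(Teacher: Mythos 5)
Your proof is correct and takes essentially the same route as the paper's: the paper expands the square in $\ell_{F}$, observes that the $r([\bx,\by])^2$ term is a $\bw$-independent constant, and uses the law of total expectation to replace $r([\bx,\by])$ in the cross term by $V^\star([\bx,y^t])$ --- exactly your tower identity $E_{\by \sim \p}\left[\,r([\bx,\by]) \mid y^t\,\right] = V^\star([\bx,y^t])$, applied before differentiation rather than to the difference of gradients as you do. Your extra care in handling the random-length sum via $\sum_{t \geq 1}\mathbf{1}[|\by|\geq t]$ and justifying the expectation--gradient interchange addresses a step the paper glosses over, but the core argument is identical.
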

\begin{proof}
Let $L_{\bx} := E_{\by \sim p} |\by|,$ be the expected length of the response in context $\bx.$
\begin{align}
E_{\by \sim p }\ell_{F}(\bx, \by; \bw) & = E_{\by \sim p} \left\{ \frac{1}{2}\sum_{t \in [|\by|]}  \left(V_\bw([\mathbf{x}, y^t]) - r([\bx, \by]) \right)^2\right\}\\
& = E_{\by \sim p} \left\{ \frac{1}{2}\sum_{t \in [|\by|]}  \left(V_\bw([\mathbf{x}, y^t])^2  - 2 V_\bw([\mathbf{x}, y^t])^2 r([\bx, \by]) + r([\bx, \by])^2 \right)\right\}\\
& = E_{\by \sim p} \left\{ \frac{1}{2}\sum_{t \in [|\by|]}  \left(V_\bw([\mathbf{x}, y^t])^2  - 2 V_\bw([\mathbf{x}, y^t]) r([\bx, \by]) + r([\bx, \by])^2 \right)\right\}\\
& = E_{\by \sim p} \left\{ \frac{1}{2}\sum_{t \in [|\by|]}  V_\bw([\mathbf{x}, y^t])^2 \right\}  - E_{\by \sim p} \left\{\sum_{t \in [|\by|]} V_\bw([\mathbf{x}, y^t]) r([\bx, \by])\right\} +  C_\bx\\
& = E_{\by \sim p} \left\{ \frac{1}{2}\sum_{t \in [|\by|]}  V_\bw([\mathbf{x}, y^t])^2 \right\}  - E_{\by \sim p} \left\{\sum_{t \in [|\by|]} V_\bw([\mathbf{x}, y^t]) E_{y_{t+1}, \ldots} \{r([\bx, \by])\}\right\} +  C_\bx\\
& = E_{\by \sim p} \left\{ \frac{1}{2}\sum_{t \in [|\by|]}  V_\bw([\mathbf{x}, y^t])^2 \right\}  - E_{\by \sim p} \left\{\sum_{t \in [|\by|]} V_\bw([\mathbf{x}, y^t])  V^\star([\bx, \by])\right\} +  C_\bx
\end{align}
where the last step follows from the law of total expectation and 
\begin{equation}
    C_\bx := E_{\by \sim p} \left\{ \frac{1}{2}\sum_{t \in [|\by|]} r([\bx, \by])^2 \right\}.
\end{equation}
Hence, 
\begin{equation}
    \nabla_\bw E_{\by \sim p }\ell_{F}(\bx, \by; \bw) = \nabla_\bw E_{\by \sim p} \left\{ \frac{1}{2}\sum_{t \in [|\by|]}  V_\bw([\mathbf{x}, y^t])^2 \right\} - \nabla_\bw E_{\by \sim p} \left\{\sum_{t \in [|\by|]} V_\bw([\mathbf{x}, y^t])  V^\star([\bx, \by])\right\} = \nabla_\bw \mc{L}^\star(\bw),
\end{equation}
which completes the proof.
\end{proof}

\begin{theorem}
Assume that $\ell_F(\bx, \by, \theta)$ is such that it is $L$-Lipschitz for all $\bx$ and $\by.$ Further assume that $\ell_F(\bx, \by, \theta)$ has a non-empty solution set and satisfies the PL inequality~\citep[Eq. (3)]{karimi2016linear}. Further, assume that $E\{ \|\nabla_\bw \ell_F(\by,\by,\bw_i)\|^2\} \leq C^2$ for all $\theta_i$. Then, applying SGD on $\ell_F$ converges to $\bw^\star$.
\label{thm:formal-fudge-convergence}
\end{theorem}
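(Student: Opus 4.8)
The plan is to reduce the statement to a standard convergence guarantee for stochastic gradient descent under the Polyak--{\L}ojasiewicz (PL) condition, with Lemma~\ref{lem:FUDGE-unbiased} supplying the essential bridge between the surrogate loss $\ell_F$ that SGD actually minimizes and the target objective $\mc{L}^\star$ whose stationary points we care about. Concretely, the iteration is $\bw_{i+1} = \bw_i - \eta_i \nabla_\bw \ell_F(\bx_i, \by_i; \bw_i)$, where $\bx_i \sim \mu$ and $\by_i \sim \p$ are drawn freshly at each step. By Lemma~\ref{lem:FUDGE-unbiased}, for each fixed $\bw_i$ the stochastic gradient is unbiased for the true gradient, i.e. $E[\nabla_\bw \ell_F(\bx_i, \by_i; \bw_i) \mid \bw_i] = \nabla_\bw \mc{L}^\star(\bw_i)$. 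Hence running SGD on $\ell_F$ is exactly running SGD on $\mc{L}^\star$ with an unbiased, bounded-variance gradient oracle, and the problem becomes textbook.

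First I would record that the $L$-Lipschitz-gradient (smoothness) assumption on $\ell_F$ transfers to $\mc{L}^\star$ by taking expectation over $(\bx, \by)$, so $\mc{L}^\star$ is $L$-smooth; likewise the second-moment bound $E\{\|\nabla_\bw \ell_F\|^2\} \leq C^2$ controls the variance of the oracle. Next I would apply the descent lemma, which $L$-smoothness yields as
\begin{equation*}
\mc{L}^\star(\bw_{i+1}) \leq \mc{L}^\star(\bw_i) - \eta_i \langle \nabla_\bw \mc{L}^\star(\bw_i), \nabla_\bw \ell_F(\bx_i,\by_i;\bw_i)\rangle + \frac{L \eta_i^2}{2}\|\nabla_\bw \ell_F(\bx_i,\by_i;\bw_i)\|^2 .
\end{equation*}
Taking conditional expectation, the cross term collapses to $-\eta_i \|\nabla_\bw \mc{L}^\star(\bw_i)\|^2$ by unbiasedness, while the last term is bounded via $C^2$. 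Invoking the PL inequality $\|\nabla_\bw \mc{L}^\star(\bw_i)\|^2 \geq 2\nu (\mc{L}^\star(\bw_i) - \mc{L}^\star(\bw^\star))$ then converts this into a contraction recursion on the expected suboptimality gap $E[\mc{L}^\star(\bw_i) - \mc{L}^\star(\bw^\star)]$. A constant step size gives linear convergence to an $O(\eta C^2)$ neighborhood of the optimum, and a decaying schedule $\eta_i = O(1/i)$ gives convergence to $\bw^\star$ at rate $O(1/i)$; this is precisely Theorem~4 of~\citet{karimi2016linear}, which I would cite to close the argument.

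The main obstacle is not the SGD machinery, which is routine once the oracle is identified, but rather justifying that the assumptions stated on the per-sample loss $\ell_F$ genuinely deliver the properties needed for $\mc{L}^\star$ --- in particular that expectation preserves smoothness and that the PL condition for the expected objective is the correct one to invoke --- together with checking that the gradient/expectation interchange underlying Lemma~\ref{lem:FUDGE-unbiased} is valid (dominated convergence, finiteness of $L_\bx$). I would also be careful that the claim of ``convergence to $\bw^\star$'' should be read as convergence of the objective value to the optimum of $\mc{L}^\star$ (equivalently, to a stationary point, matching the informal statement), since under PL every stationary point is a global minimizer while the minimizer itself need not be unique.
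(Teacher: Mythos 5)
Your proposal is correct and takes essentially the same route as the paper: the paper's proof consists of exactly the reduction you describe, namely invoking Lemma~\ref{lem:FUDGE-unbiased} to identify SGD on $\ell_F$ as unbiased SGD on $\mc{L}^\star$ and then citing Theorem~4 of \citet{karimi2016linear}, whose internal argument is precisely the descent-lemma-plus-PL recursion you spell out. Your closing caveats---that the smoothness and PL assumptions must be read as holding for the expected objective $\mc{L}^\star$, and that ``convergence to $\bw^\star$'' means convergence of the objective value to the optimum rather than to a unique minimizer---are accurate accounts of what the cited theorem delivers and are left implicit in the paper's one-line proof.
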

\begin{proof}
The proof follows directly from Lemma~\ref{lem:FUDGE-unbiased} and applying~\citep[Theorem 4]{karimi2016linear}, which also characterizes the convergence rate.
\end{proof}

\end{document}